\documentclass[11pt]{article}

\usepackage[utf8]{inputenc}
\usepackage[T1]{fontenc}
\usepackage{lmodern}
\usepackage{microtype}
\usepackage{ragged2e}
\usepackage[margin=1in]{geometry}

\usepackage{amsmath,amssymb,amsthm}
\usepackage{mathtools}
\usepackage{bm}
\usepackage{graphicx}
\usepackage{booktabs}
\usepackage{array}
\usepackage{xcolor}
\usepackage{enumitem}
\usepackage{caption}
\usepackage{subcaption}
\usepackage{tikz}
\usetikzlibrary{arrows.meta,positioning,calc}
\usepackage[most]{tcolorbox}

\definecolor{ramblue}{HTML}{1F77B4}
\definecolor{nvmorange}{HTML}{E66100}
\definecolor{cloudteal}{HTML}{2E7D6B}
\definecolor{rentred}{HTML}{B2182B}
\definecolor{inkgray}{HTML}{555555}

\newtcolorbox{econbox}[1][]{%
  enhanced,breakable,colback=black!2,colframe=black!65,boxrule=0.5pt,arc=1.5pt,
  left=8pt,right=8pt,top=6pt,bottom=6pt,
  fonttitle=\bfseries\small,coltitle=white,colbacktitle=black!72,
  title={#1},toptitle=2.5pt,bottomtitle=2.5pt}

\usepackage[numbers,sort&compress]{natbib}

\usepackage[colorlinks=true,linkcolor=blue!55!black,citecolor=green!45!black,%
            urlcolor=blue!55!black,breaklinks=true]{hyperref}
\usepackage{xurl}   
\usepackage{cleveref}
\hypersetup{pdftitle={Memory as a Wasting Asset: Pricing Flash Endurance for Embodied Agents, and the Limits of Doing So},pdfauthor={Josef Chen}}

\graphicspath{{figures/}}

\newcommand{\runid}[1]{}

\newcommand{\vbe}{v^{\star}_{\mathrm{BE}}}
\newcommand{\vdc}{v^{\star}_{\mathrm{DC}}}
\newcommand{\etamark}{\hat{m}}
\newcommand{\etasim}{\eta_{\mathrm{sim}}}

\theoremstyle{plain}
\newtheorem{proposition}{Proposition}
\newtheorem{corollary}{Corollary}
\theoremstyle{definition}
\newtheorem{assumption}{Assumption}
\newtheorem{definition}{Definition}

\crefname{assumption}{Assumption}{Assumptions}
\Crefname{assumption}{Assumption}{Assumptions}
\crefname{proposition}{Proposition}{Propositions}
\Crefname{proposition}{Proposition}{Propositions}
\crefname{corollary}{Corollary}{Corollaries}
\Crefname{corollary}{Corollary}{Corollaries}
\crefname{definition}{Definition}{Definitions}
\Crefname{definition}{Definition}{Definitions}

\newcommand{\Eend}{E_{\mathrm{end}}}
\newcommand{\cwear}{c_{\mathrm{wear}}}
\newcommand{\wbar}{\bar w}
\newcommand{\Vfast}{V^{\mathrm{fast}}}
\newcommand{\Vslow}{V^{\mathrm{slow}}}
\newcommand{\Ind}{\mathbf{1}}
\newcommand{\E}{\mathbb{E}}

\DeclareMathOperator{\sign}{sign}
\DeclareMathOperator{\Cov}{Cov}

\title{\bfseries Memory as a Wasting Asset:\\[3pt]
       Pricing Flash Endurance for Embodied Agents, and the Limits of Doing So}

\author{Josef Chen\\[2pt]
\small KAIKAKU\\
\small \texttt{josef@kaikaku.ai}}

\date{June 2026}

\begin{document}
\maketitle

\begin{abstract}
A robot's flash endurance is a non-renewable stock: every persisted write spends
one of a few thousand program/erase cycles and never refills, yet no fielded robot
memory system prices which memories are worth an erase cycle. We treat embodied
memory as depreciating capital and price that stock with a single endurance shadow
price $\eta$, which makes cost-minimizing placement across a RAM / on-board NVM /
cloud hierarchy a threshold in a wear-augmented per-byte index. The index is
cost-optimal whatever the sign of the value--write association $\chi$; only when
$\chi>0$ does the optimum turn non-monotone, sending a robot's most valuable
memories off its flash.

The pivot is thus empirical, and we measure $\chi$ on real robot logs at a
pre-specified gate: its sign is a property of the deployment regime---positive on
recurrent long-horizon manipulation ($\hat\chi=+1.0\times10^{-3}$, replicated at
full power), null on a shorter-horizon suite, and negative on non-recurrent
teleoperation. Two boundaries scope the result. The endurance budget is dormant on
premium $3{,}000$-P/E TLC at datasheet prices and binding on the commodity QLC/eMMC
($\sim\!1{,}000$ P/E) that cheaper edge robots run. And where it binds, a learned
wear-aware controller only ties price-based routing on task value, because realized
value is tier-invariant across RAM, NVM, and cloud: the rent governs device
lifetime and cost, not task performance. Whether wear-aware placement improves task
value remains open---$\chi$ is measured against a value proxy, and the non-monotone
optimum, while proven, is not yet observed in data.
\end{abstract}

\section{Introduction}\label{sec:intro}
A robot ships with a finite quantity of flash. Every block of its on-board NAND
tolerates a fixed number of program/erase cycles (roughly $3{,}000$ for the
TLC parts that dominate edge
platforms~\citep{kioxia_tbwpe,wd_industrialflash}), after which it wears out
and is gone. On-board memory is therefore not free storage: each persisted write
spends a fraction of a stock that does not refill, so the right object to reason
about is \emph{memory as a depreciating capital asset} carrying a per-period
user cost~\citep{halljorgenson1967tax}, not a scratchpad of unlimited capacity
(\cref{fig:hero}).
No fielded embodied-memory system prices this. They decide \emph{what} to keep;
none decide which kept memory is worth an erase cycle, in which physical tier it
should sit, or what spending the cycle costs in joules and device lifetime.

\begin{figure}[t]
  \centering
  \includegraphics[width=\linewidth]{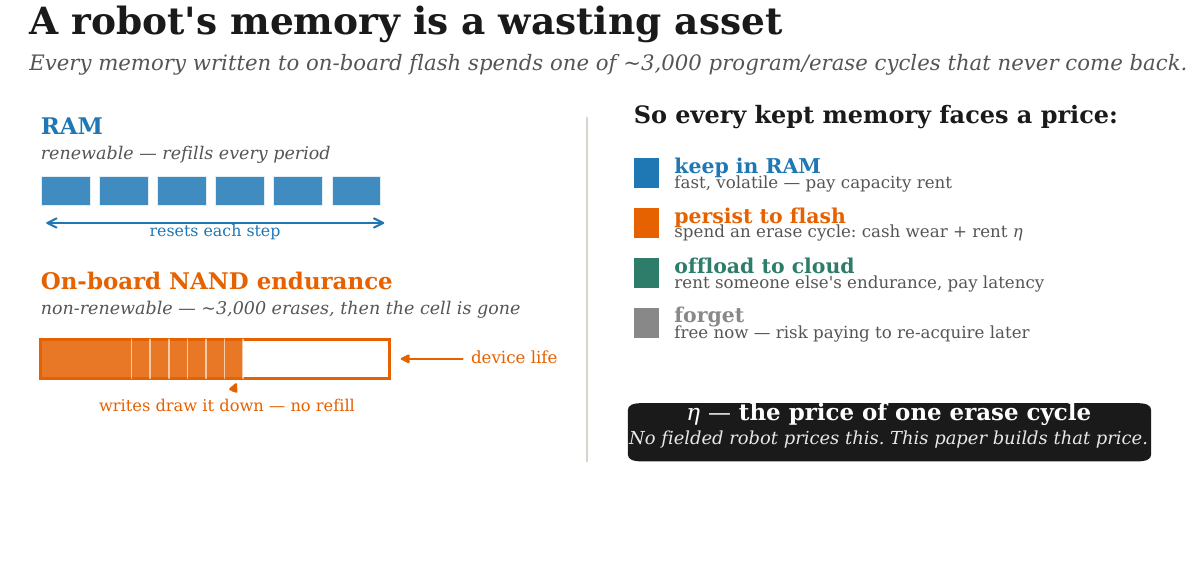}
  \caption{\textbf{The core idea in one picture.} On-board NAND endurance is a
  non-renewable stock: each persisted write spends one of a few thousand
  program/erase cycles and is gone, whereas RAM capacity refills every period.
  Every retained memory therefore faces a priced choice---keep in RAM, persist to
  flash (spending an erase cycle), offload to cloud, or forget---governed by a
  single endurance rent $\eta$. No fielded embodied-memory system prices this; we
  build that price.}
  \label{fig:hero}
\end{figure}

Any embodied memory system must answer three questions: \emph{when} to write a
memory, \emph{where} that memory should physically live, and \emph{what} it is
worth to keep it there. The first is the subject of our predecessor
\textsc{AURA}~\citep{chen2026aura}, a learned write gate; this paper supplies
the open pair. The program is a three-stage arc,
\textbf{WHEN $\rightarrow$ WHERE $\rightarrow$ WORTH} (\cref{fig:arc}): AURA gates
the write; this paper places the retained item across a RAM / on-board NVM /
cloud hierarchy; and an economic layer prices the scarce resource that placement
consumes: the erase cycle.

\begin{figure}[t]
  \centering
  \includegraphics[width=0.96\linewidth]{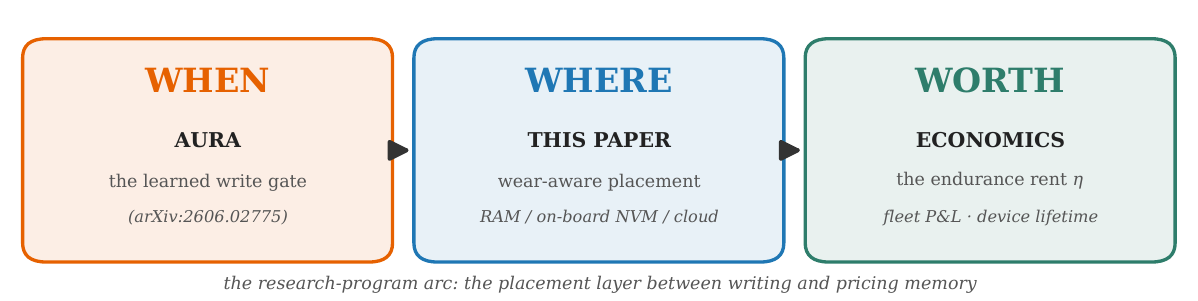}
  \caption{Research-program-arc banner. AURA decides \emph{when} to write;
  this paper decides \emph{where} memory lives; the economic layer prices
  \emph{what} it is worth (the endurance rent $\eta$).
}
  \label{fig:arc}
\end{figure}

\paragraph{WHEN.}
Embodied-memory systems are uniformly \emph{what-to-remember}
machinery~\citep{gorlo2026surprise,sridhar2025memer,wang2024karma}: they gate,
retrieve, or consolidate content to maximize task success. AURA is the author's
own \emph{when}-to-write gate on a single constant-size store. None of these
decide the physical tier an item occupies, its joule/erase cost, or its
economic worth.

\paragraph{WHERE.}
Datacenter wear-aware caching already exhibits the core qualitative behavior we
study (keeping write-heavy objects off endurance-limited
flash~\citep{flashield2019nsdi,cachesack2023tos,kangaroo2021sosp}), and we do
not claim the phenomenon. What is unoccupied is the \emph{embodied} object: in a
robot, placement is coupled simultaneously to \emph{energy}, to
\emph{task-conditioned depreciating} value, and to a \emph{cloud-offload} tier
that trades a transmit-energy-plus-latency penalty for a saved erase. We port
wear-aware admission into that regime rather than reinvent it, and stress-test
where the port survives.

\paragraph{WORTH.}
The binding constraint is an exhaustible stock, so the consumed erase cycle
carries a present-value scarcity rent $\eta$~\citep{hotelling1931economics};
memory becomes \emph{depreciating capital} with a user
cost~\citep{halljorgenson1967tax,tokeneconomics2026,marginaltoken2026}. We make
$\eta$ the operative economic object: it fixes the placement boundary, signs
how placement reacts to the 2025--26 memory-price
supercycle~\citep{trendforce2025asp}, and, because spending an erase cycle
consumes device lifetime, doubles as a fleet e-waste / embodied-carbon lever
\citep{weppe2025nandcarbon,bashir2023embodiedpitfalls}.

\paragraph{A measured, not assumed, antecedent.}
The wear-augmented index and its rent $\eta$ are the optimal policy form
\emph{however} value and write-intensity covary; $\eta$ binds whenever the
endurance stock is scarce, and scarcity is a regime, not a given---dormant on
premium TLC at datasheet prices, binding on the commodity QLC/eMMC cheaper edge
robots run (\cref{sec:results-binding}). The \emph{non-monotone} refinement
(\cref{prop:nonmono}) needs one further primitive, a positive value--write
association $\chi>0$, which we treat as a \emph{falsifiable antecedent} and measure
on real robot logs at a pre-specified \$25 gate, with a published kill criterion,
before any controller is trained (\cref{ass:assoc}). The headline empirical
finding is that $\chi$'s sign is a property of the deployment regime, not a
universal law: positive on recurrent long-horizon manipulation with a small
backbone, null on a shorter-horizon suite, and negative on non-recurrent
teleoperation. The coupling tracks long-horizon \emph{recurrence}---re-observation
of valuable scenes couples write-intensity with value, whereas value-agnostic
teleoperation churn decouples it---and it is real but small. We report it only
where a pre-specified cross-backbone agreement floor is met: a larger OpenVLA-7B
backbone places items on a near-orthogonal value axis and is \emph{uninterpretable}
against the headline rather than a disconfirmation. Full estimates, clustering, and
corrections are in \cref{sec:results-chi,sec:results-dose}.

\paragraph{Contributions.}
\begin{enumerate}[leftmargin=1.4em,itemsep=2pt]
\item \textbf{Measurement: the value--write coupling's sign is regime-dependent}
(\cref{sec:results-chi,sec:results-dose}). On real robot logs at a pre-specified
\$25 gate, $\chi$ is positive on recurrent long-horizon manipulation (LIBERO-Long,
SmolVLA-0.5B; Holm-reject, CI excluding zero), null on a shorter-horizon suite, and
negative on non-recurrent teleoperation (DROID; post-hoc), with a recurrence
dose-response that replicates at full power ($\rho=0.94$, $p<10^{-4}$). The coupling
tracks long-horizon recurrence, not a dataset. We pair it with a \emph{cross-backbone
agreement floor} (pre-specified Spearman $\ge 0.6$), below which cross-model sign
claims are uninterpretable, as our OpenVLA-7B arm shows ($\rho_s=0.05$).
\item \textbf{Theory: a wear-augmented placement index and a conditional
non-monotone optimum} (\cref{sec:model}). Cost-minimizing placement across
RAM/NVM/cloud is a threshold in a per-byte index set by a single endurance shadow
price $\eta$, optimal regardless of the sign of $\chi$. On this sign-agnostic spine,
a \emph{proven} strictly-non-monotone-in-value optimum (\cref{prop:nonmono}) holds
on the $\chi>0$ branch---with the antecedent measured, not assumed (\cref{ass:assoc}).
\item \textbf{Boundary: when the pricing layer is live, and when it is not}
(\cref{sec:results-binding,sec:results-controller}). At datasheet prices the
endurance budget is dormant on premium $3{,}000$-P/E TLC but binding on the
commodity QLC/eMMC ($\sim\!1{,}000$ P/E) cheaper edge robots run. Where it binds, a
$3.15$M-parameter learned controller is genuinely endurance-aware---it strictly
beats the naive all-NVM strategy---but only ties the strongest cost-matched baseline
on a task-value proxy. The tie is structural: with the cloud repriced at its slow
value and connectivity swept, the wear-aware advantage stays zero because LIBERO
write-intensity is nearly constant ($\mathrm{CV}(w)=0.13\%$), collapsing the index
to value-ranking; a synthetic control recovers the advantage only once
$\mathrm{CV}(w)$ is large (\cref{fig:dispersion}). On today's hardware and workloads
simple price-based routing suffices, and whether wear-awareness improves task value
is unresolved.
\item \textbf{Economics: a calibrated capital model of wasting memory}
(\cref{sec:pricestatics,sec:cscy}). Signed price comparative statics over an
oligopoly band (\cref{prop:statics}) confirm three of four predicted signs.
Re-solving across the 2025--26 NAND supercycle cuts the equilibrium rent $\etasim$
by $\approx 39\%$ while the break-even durability $\vbe\!=\!0.91$ holds fixed---the
shock hits the wear margin, not the placement boundary---and a bounded corollary
links cost-optimal forgetting to device-lifetime extension (\cref{cor:lifetime}).
\end{enumerate}

Every headline claim carries an epistemic tier, sorted in \cref{fig:ledger}: what
is \emph{proven} as theory, what is \emph{measured} on a value proxy, what is
\emph{regime-gated} by the hardware, and the one \emph{negative} result. (Each
number maps to its run and data file in the reproducibility appendix.)

\begin{figure}[t]
  \centering
  \includegraphics[width=\linewidth]{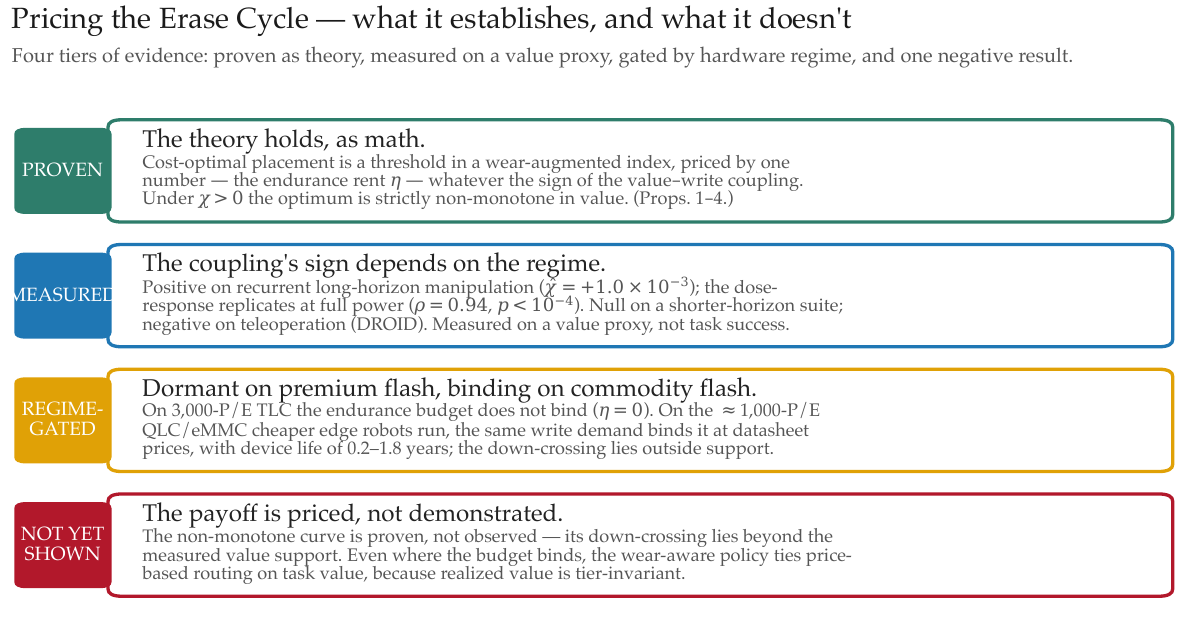}
  \caption{\textbf{Every headline claim, by epistemic tier.} What is
  \emph{proven} (the wear-augmented index and the conditional non-monotone
  optimum, as theorems); \emph{measured} on a value \emph{proxy} (the
  regime-dependent sign of $\chi$, replicated but small); \emph{regime-gated} (the
  budget does not bind on premium $3{,}000$-P/E TLC but \emph{binds at datasheet
  prices on commodity QLC/eMMC}; the down-crossing lies outside support); and a
  \emph{negative result} (even where the budget binds, the wear-aware policy ties
  price-based routing on task value---realized value is tier-invariant).}
  \label{fig:ledger}
\end{figure}

The lead empirical figure is \cref{fig:chimatrix}, the measured
backbone$\times$regime $\chi$ matrix; the model-derived wear phase diagram
(\cref{fig:phase}) and its interior down-crossing $\vdc$ are a theory illustration,
deferred to \cref{sec:model}.

\section{Related Work}\label{sec:related}
Our regime sits at the intersection of six literatures. Two of its ingredients are
old: datacenter storage already keeps high-value, write-heavy items off
endurance-limited flash, and already solves an endurance-budgeted admission
knapsack. What is new is the \emph{joint} embodied object---three-tier RAM/NVM/cloud
placement under a \emph{simultaneous} energy and \emph{non-renewable} endurance
budget, with task-conditioned depreciating value and a priced exhaustible-stock
shadow price $\eta$---which no single prior literature spans. Each subsection names
the closest prior art and what it leaves open.

\subsection{Wear-aware flash caching and storage}\label{sec:rw-wear}
The closest prior work is \textsc{Flashield}~\citep{flashield2019nsdi}: its
learned admission filter uses DRAM to keep write-amplifying objects off flash
under a write-rate cap, already exhibiting the qualitative phenomenon we
analyze---persistence is not monotone in an object's worth. What we add is its
\emph{driver and formalization}: a proven down-crossing driven by a priced
non-renewable endurance stock and value depreciation $\delta$, in a three-tier
energy-budgeted embodied loop with a cloud-offload action Flashield lacks.
\textsc{CacheSack}~\citep{cachesack2023tos} solves a per-category admission
knapsack that cuts Google datacenter flash wearout by $17.8\%$---our
placement-index skeleton at cloud scale, but with endurance as a soft cost term
rather than a hard finite stock, and no energy, depreciating value, or multi-tier
action. \textsc{Kangaroo}~\citep{kangaroo2021sosp} supplies the lifetime-bounded
write-cost-threshold admission rule we adopt as a cost-matched baseline
(\cref{tab:baselines}). Managed-Retention Memory~\citep{mrm2025} \emph{calls for}
an endurance budget plus retention-aware placement but supplies no controller,
depreciating-value model, or shadow-price theorem; DPRO~\citep{dpro2025caching}
learns per-content retention with a soft P/E cost but keys on content popularity, a
single tier, and no energy or capital layer. The learned write-avoidance
line~\citep{wang2018avoidwrites,mlwp2020date,nemo2026fast} makes steering writes
around flash standard practice, so our controller is a necessity, not a novelty
claim.

\subsection{Embodied / robot memory and VLA models}\label{sec:rw-embodied}
Embodied-memory systems decide \emph{what} to remember; none decide \emph{where}
a retained item lives or what it costs in joules and erase cycles.
Surprise-gating~\citep{gorlo2026surprise} produces a value proxy we can place under
a budget, but not a placement decision; \textsc{MemER}~\citep{sridhar2025memer}
bounds context cost by keeping $\le 8$ keyframes and lists discarding them as
future work, naming the eviction-under-budget gap we close;
\textsc{ReMEmbR}~\citep{anwar2024remembr} and \textsc{KARMA}~\citep{wang2024karma}
build and prune memory stores for recall and prompt relevance, not for a joule or
erase budget across tiers. Our own \textsc{AURA}~\citep{chen2026aura} is the launch
point---a \emph{when}-to-write gate on a single constant-size store---which leaves
the where-and-worth layer open and serves as the single-tier baseline our
controller must dominate (\cref{tab:baselines}). MemGPT's OS-style token
paging~\citep{packer2023memgpt} moves text between fast and slow stores to relieve
capacity, abstracting away the hardware-wear and dollar economics that are our
subject. The backbones we evaluate, SmolVLA~\citep{shukor2025smolvla} and
OpenVLA~\citep{kim2024openvla}, carry no persistent-memory mechanism; their
edge-deployability is what makes RAM/NVM/cloud placement economically live.

\subsection{Economic and decision-theory foundations}\label{sec:rw-econ}
The economics is assembled from mature toolkits, each applied to a new object. The
Hall--Jorgenson user cost of capital~\citep{halljorgenson1967tax,jorgenson1967theory}
supplies ``memory as a depreciating asset'' with per-period rent $=$ holding cost
$+$ depreciation $\delta$; Hotelling's exhaustible-resource
theory~\citep{hotelling1931economics} supplies the erase cycle as a unit of a
non-renewable stock carrying a scarcity rent $\eta$ (demoted to a bounded caveat,
\cref{sec:hotelling}, since the closed-form price path fails under stochastic
demand). That $\eta$ decouples the per-item program follows from Lagrangian
relaxation---Whittle's restless-bandit subsidy~\citep{whittle1988restless}, the
Gittins index~\citep{gittins1979bandit}, weakly-coupled-MDP
relaxation~\citep{adelmanmersereau2008relaxations}, and constrained-MDP
duality~\citep{altman1999cmdp}---with the new ingredient that our coupling
constraint is an \emph{intertemporal stock}, not a per-period one, which yields the
value down-crossing. Pricing agent memory as depreciating capital is itself not new
in the \emph{token}-budget setting: Token Economics~\citep{tokeneconomics2026} and
the Marginal-Token-Allocator~\citep{marginaltoken2026} give cache-as-inventory
shadow prices, but for token and context budgets with no physical endurance stock,
RAM/NVM/cloud placement, energy term, or non-monotone optimum---so our claim is
re-scoped to the \emph{physical} P/E stock. Omri et
al.~\citep{omri2026agentmemory} profile stateful agent-memory cost without a shadow
price or capital model, and inference-aware deployment
economics~\citep{sardana2024beyondchinchilla} motivates pricing write, hold, and
retrieve jointly over the horizon.

\subsection{Edge/cloud offloading, robot hardware, and cost anchors}\label{sec:rw-edge}
Robot computation offloading is a learned when-to-offload decision: Chinchali et
al.~\citep{chinchali2019offloading} solve perception offload under stochastic
networks with deep RL; we reuse that machinery for the persist-versus-offload
decision and use their policy as the ``this is just offloading'' rebuttal baseline
(it offloads compute, not a persistent store, with no endurance stock).
Neurosurgeon~\citep{kang2017neurosurgeon} and the split-computing
survey~\citep{matsubara2022splitcomputing} supply the bandwidth-and-energy cost
terms, but partition the compute graph, not memory state. We correct the cloud-tier
dollar term for concurrency using Patil's utility-aware
methodology~\citep{patil2026concurrency} (a naive per-token estimate is off by
$1/U$, a $2.5$--$24\times$ penalty), and anchor edge-decode energy/latency to our
own batch-1 measurements~\citep{chen2026decodegap}. Endurance and energy constants
are datasheet-pinned to the Jetson Thor and Orin
platforms~\citep{nvidia2025jetsonthor,nvidia2022jetsonorin}.

\subsection{Caching theory, learned policies, and RL-for-systems}\label{sec:rw-caching}
The offline optimum our hindsight solver relaxes is Belady's clairvoyant
replacement~\citep{belady1966study}, with cost-aware competitive vocabulary from
weighted-paging primal-dual analysis~\citep{bansal2012primaldual} (whose fetch cost
is renewable, not a consumable wear stock). The learned-caching line gives our
recipe and baselines: LRB~\citep{song2020lrb} regresses to a relaxed Belady
boundary and Parrot~\citep{liu2020parrot} imitates the oracle, validating the
behavior-cloning warm-start we use---except our oracle solves a knapsack-over-time
under an endurance budget. Baleen~\citep{wong2024baleen} is the nearest
write-cost-aware learned cache, but admits under a write-\emph{rate} constraint with
no non-renewable stock, depreciation, energy, or cloud tier. Our PPO-on-placement
follows RL-for-systems
precedent~\citep{jay2019aurora,mao2016deeprm,mirhoseini2017deviceplacement}; the
novelty is the resource (memory tiers with consumable endurance), not the verb.
Because learned caches underperform heuristics under abundant cache or distribution
shift~\citep{qiao2023frozenhot,lcr2025laru}, we report cost-matched baselines and
bound the worst case to a tuned heuristic.

\subsection{Memory market band and cs.CY / policy context}\label{sec:rw-market}
The 2025--26 memory supercycle hands the price-statics layer a dated, citable grid:
enterprise TLC NAND roughly \$0.06--0.22/GB across the Low/Base/High band, anchored
by TrendForce ASP tracking~\citep{trendforce2025asp}, Counterpoint server-DRAM
analysis~\citep{counterpoint2025serverdram}, and Epoch AI's component cost
model~\citep{epochai2024b200}. The cs.CY claim---endurance-aware placement extends
device life and defers fleet embodied carbon---rests on Weppe et al.'s $\approx
22$~kg~CO$_2$e/TB for 3D NAND~\citep{weppe2025nandcarbon}, within the ACT
carbon-modeling framework~\citep{gupta2022act} and AI-hardware
LCA~\citep{schneider2025lifecycle,pirson2021iotcarbon}. Its bounds: no regulation
sets a numeric P/E floor, lifetime extension can overstate real
savings~\citep{bashir2023embodiedpitfalls}, and superlinear new-hardware efficiency
can justify replacement over retention~\citep{switzer2022junkyard}. The EU
circular-economy regime (Right-to-Repair~\citep{eu2024rtr},
Ecodesign-for-Sustainable-Products~\citep{eu2024espr}) brings storage products in
scope, so endurance-aware placement complements, and is not mandated by, the policy
frontier.

\section{Model}\label{sec:model}
This section states the formal core: primitives and assumptions A1--A6, the
placement index, the conditional non-monotonicity result
(Prop.~\ref{prop:nonmono}), the RAM-survival result (Prop.~\ref{prop:ramslack}),
the Hotelling caveat with the cs.CY corollary, and the price comparative statics
P5a--d. Proofs are deferred to \cref{app:proofs}; \cref{fig:hero} shows the priced
loop and the box below states its logic in plain English.

\begin{econbox}[The economic logic in plain English]
\small A robot's flash ships with a fixed stock of erase cycles. The
cost-minimizing planner prices that stock with one number, the \emph{endurance
rent} $\eta$---what one erase cycle is worth in the best feasible plan. Each
candidate memory then faces a capital-budgeting test: persist on-board only if
being \emph{local} covers storage, energy, \emph{and} the full user cost of wear
$(\cwear+\eta)\,w_i$; else rent the cloud's endurance (paying latency) or forget it
and risk re-acquisition. When memory prices move, the budget still binds and the
rent re-clears---re-pricing the marginal memory while the persist/evict boundary
barely moves (\cref{sec:pricestatics}).
\end{econbox}

\subsection{Primitives and assumptions}\label{sec:primitives}
Discrete time $t=0,\dots,T-1$, finite horizon $T$, discount $\gamma\in(0,1)$. At
each $t$ an item stream arrives (admitted upstream by AURA; admission is
exogenous here; we price \emph{placement}). Item $i$ has type
$\theta_i=(v_i,\delta_i,\lambda_i,s_i,w_i,\kappa_i)$ drawn i.i.d.\ from a joint
law $F$.

\begin{assumption}[value / depreciation / retrieval]\label{ass:value}
Base value $v_i\ge 0$ (monetized task-success gain); geometric staleness
$\delta_i\in(0,1]$ (value at age $a$ is $v_i e^{-\delta_i a}$); Poisson retrieval
rate $\lambda_i>0$.
\end{assumption}

\begin{assumption}[size / write-intensity / recompute]\label{ass:size}
Size $s_i>0$; NVM erase ops per period if resident $w_i\ge 0$; recompute cost
$\kappa_i\ge 0$ if discarded and later needed. $v$ and $w$ are
\emph{distinct coordinates} of $\theta$: nothing in the model forces them to co-move.
\end{assumption}

\begin{assumption}[tiers]\label{ass:tiers}
$k\in\{R,N,C\}=\{$RAM, NVM, cloud$\}$. Per-byte holding rents $p_R,p_N,p_C\ge 0$;
RAM hard capacity $\sum_{i\in R_t}s_i\le C_R$ (multiplier $\mu_R\ge 0$); power cap
$\sum_i\mathrm{pow}_{i,t}\le P$ (multiplier $\mu_P\ge 0$). Access cost
$\approx 0$ (RAM), I/O energy $e_N$ (NVM), transmit energy $+$ latency
$e_C+\ell\pi$ (cloud).
\end{assumption}

\begin{assumption}[non-renewable endurance: the load-bearing asymmetry]\label{ass:endurance}
NVM residency consumes erase cycles from a \emph{fixed stock}
\[
  \sum_{t=0}^{T-1}\sum_{i\in N_t} w_i \;\le\; \Eend
  \;=\;(\text{erase cycles/block})\times(\text{blocks}).
\]
This is the \emph{only} constraint integrated over time; RAM and power are flow
constraints that reset each period. Let $\eta\ge 0$ be the single multiplier on
this budget: the \textbf{shadow price of one erase cycle}.
\end{assumption}

\begin{assumption}[value--write association: empirically testable antecedent]\label{ass:assoc}
\begin{sloppypar}
Let $\wbar(v):=\E[w\mid v]$ and define the \textbf{association coefficient}
$\chi:=\tfrac{d}{dv}\wbar(v)$ (globally $\sign\chi=\sign\Cov_F(w,v)$ when $\wbar$
monotone). \cref{ass:assoc} is \emph{not assumed}: it is estimated on real
robot logs at a pre-specified \$25 go/no-go gate with a published kill: if
$\chi\le 0$ the non-monotone headline (\cref{prop:nonmono}) is withdrawn for the
monotone index (\cref{prop:monotone}).
\end{sloppypar}
\end{assumption}

\begin{assumption}[recompute sub-linearity]\label{ass:recompute}
$\bar\kappa(v):=\E[\kappa\mid v]$ has $\bar\kappa(v)/v$ non-increasing:
high-value items are not proportionally more expensive to regenerate. Testable
at the gate.
\end{assumption}

\paragraph{The program.}
The agent chooses $x_{i,t}\in\{R,N,C,\varnothing\}$, earns
$\rho_{i,k}(t)=$ (depreciated value) $-$ (access cost) $-$ (recompute if
discarded and needed), and solves
\begin{equation}\label{eq:program}
\max_{\{x_{i,t}\}}\ \E\!\Big[\sum_{t}\gamma^t\sum_i\big(\rho_{i,x_{i,t}}(t)
  -p_{x_{i,t}}s_i -\cwear w_i\,\Ind[x_{i,t}=N]-\text{migr}\big)\Big]
\ \text{s.t.}\
\begin{cases}
\sum_{i\in R_t}s_i\le C_R & \forall t,\\
\sum_i\mathrm{pow}_{i,t}\le P & \forall t,\\
\sum_t\sum_{i\in N_t}w_i\le \Eend.
\end{cases}
\end{equation}
The Lagrangian carries flow multipliers $\mu_R(t),\mu_P(t)$ and the single stock
multiplier $\eta$.

\subsection{The placement index and the wear-augmented index}\label{sec:index}
\paragraph{Renewable limit $\Eend\to\infty\Rightarrow\eta=0$.}
Given $(\mu_R,\mu_P)$ the per-period problem decouples across items. With
discounted locality return
\begin{equation}\label{eq:Vlocal}
  V_i=\sum_{a\ge 0}\gamma^a\lambda_i e^{-\delta_i a}v_i
     =\frac{\lambda_i v_i}{1-\gamma e^{-\delta_i}},
\end{equation}
place $i$ in the fastest tier whose marginal per-byte rent its index clears,
the cutoff statistic being the per-byte index
\begin{equation}\label{eq:index}
  \boxed{\;I_i=\frac{\lambda_i\,(v_i+\kappa_i)}{s_i\,(1-\gamma e^{-\delta_i})}\;}
\end{equation}
strictly increasing in $v_i$ (higher value $\Rightarrow$ faster tier: monotone),
the ``obvious'' Belady/Gittins-with-rent regime, \emph{not} the contribution.

\begin{proposition}[monotone renewable benchmark]\label{prop:monotone}
With $\eta=0$ and the energy ordering $e_N<e_C+\ell\pi$ (on-board access cheaper
than cloud), $\Ind[x_i=N]$ is weakly increasing in $v_i$, ceteris paribus.
\end{proposition}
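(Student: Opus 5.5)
The approach is to use the decoupling noted above. Fix the flow multipliers $(\mu_R,\mu_P)$ (ceteris paribus) and set $\eta=0$. The placement of item $i$ then solves a one-item problem $\max_{k\in\{R,N,C,\varnothing\}} U_k(v_i)$, where $U_k$ is the discounted tier payoff net of priced rents. The plan is to write each $U_k$ explicitly as an affine function of $v_i$, with all other type coordinates held fixed, and compare slopes. Using \eqref{eq:Vlocal}, the on-board tiers earn
\[
U_k(v_i)=\frac{\lambda_i v_i}{1-\gamma e^{-\delta_i}}-A_k-(p_k+\mu_R\Ind[k=R])s_i-\mu_P\,\mathrm{pow}_{i,k}-\cwear w_i\Ind[k=N],\qquad k\in\{R,N\},
\]
with access cost $A_R\approx 0$ and $A_N=\lambda_i e_N/(1-\gamma)$ (or the correspondingly discounted sum). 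The cloud tier earns the same value term, at most the same per-unit-$v$ slope, but pays $e_C+\ell\pi$ per access. The forget option earns no locality value and pays the expected recompute cost $\kappa_i$.

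The key observation is that the comparison between $N$ and $R$ does not depend on $v_i$. Both local tiers collect the identical value term $\lambda_i v_i/(1-\gamma e^{-\delta_i})$, so $U_R-U_N$ involves only rents, multipliers, access energy and wear, none of which move with $v_i$ once $\eta=0$ and the other coordinates are fixed. (With $\eta>0$, $U_N$ would carry the extra term $-\eta w_i$, but $w_i$ is held fixed under ceteris paribus, so the comparison is still $v$-free. The $v$--$w$ coupling of \cref{ass:assoc} enters only through the conditional law, not here.) By contrast, $U_N-U_\varnothing$ is strictly increasing in $v_i$, since forgetting forfeits the value term. For the cloud comparison, $U_N-U_C$ has slope in $v_i$ equal to the local value slope minus the cloud value slope, which is $\ge 0$. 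Its intercept is favorable to $N$ by the energy ordering $e_N<e_C+\ell\pi$, so $U_N-U_C$ is weakly increasing in $v_i$.

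Hence the set $\{v_i: x_i=N\}$ is the intersection of three pieces: $\{U_N\ge U_R\}$, which is either all of $\mathbb{R}_+$ or empty; $\{U_N\ge U_C\}$, an up-set; and $\{U_N\ge U_\varnothing\}$, an up-set. An intersection of up-sets is an up-set, so $\Ind[x_i=N]$ is weakly increasing in $v_i$. To close the argument, fix a $v$-independent tie-breaking rule so the indicator is well defined at the crossing points.

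The step I expect to be the main obstacle is justifying that the cloud's value slope is no larger than the local one, and that the value terms of $R$ and $N$ coincide exactly. If the model lets tiers realize value differently, for instance a fast/slow distinction as in $\Vfast,\Vslow$, the $R$--$N$ comparison could acquire a $v$-dependent term and push high-value items from $N$ up to $R$. I would handle this by reading \cref{ass:tiers} as saying that tiers differ only in access \emph{cost} and not in realized value, and by stating that reading explicitly in the proof.
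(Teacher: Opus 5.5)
Your proof is correct and follows the paper's own argument. With $\eta=0$ every persistent tier is affine in $v_i$ with the common slope $\lambda_i/(1-\gamma e^{-\delta_i})$. The obstacle you anticipate does not arise, because $R$ and $N$ both earn $\Vfast_i$ and $\Vslow_i$ differs from it only by a $v$-free latency charge. So the choice among $R,N,C$ is fixed by intercepts, and the only $v$-dependent margin is persist-versus-discard, which makes $\{x_i=N\}$ an up-set. Your intersection-of-up-sets phrasing with explicit tie-breaking is a slightly tidier rendering of the paper's single-crossing threshold $v^\dagger$. One small correction: the energy ordering $e_N<e_C+\ell\pi$ only bears on whether $N$ is ever chosen. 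It plays no role in the monotonicity itself, which follows from the slope of $U_N-U_C$ alone.
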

\noindent\emph{(Proof in \cref{app:proofs}. Tested: endurance-relaxed arm,
expect monotone fit.)}

\paragraph{Binding endurance $\eta>0$: the wear-augmented index.}
Now $\Eend$ binds. NVM residency carries an \emph{extra} per-period cost
$(\cwear+\eta)w_i$: cash wear \emph{plus} the scarcity value of the consumed
erase cycle. With the RAM multiplier $\mu_R$ carried explicitly, the four
tier-specific net returns are
\begin{align}
\Pi_i^R&=\Vfast_i-(\mu_R+p_R)s_i,
  & \Pi_i^N&=\Vfast_i-p_N s_i-(\cwear+\eta)w_i,\label{eq:PiRN}\\
\Pi_i^C&=\Vslow_i-p_C s_i,
  & \Pi_i^\varnothing&=-\Pr[\text{needed}]\,\kappa_i,\label{eq:PiCnull}
\end{align}
with $\Vslow_i=\Vfast_i-\lambda_i\ell\pi/(1-\gamma e^{-\delta_i})$. Item
$i\to$ NVM iff $\Pi_i^N\ge\max(\Pi_i^R,\Pi_i^C,\Pi_i^\varnothing)$. The
NVM-vs-cloud break-even:
\begin{equation}\label{eq:benc}
  \boxed{\;\underbrace{\tfrac{\lambda_i\ell\pi}{1-\gamma e^{-\delta_i}}
    +(p_C-p_N)s_i}_{B_i:=\text{value of locality}}
  \;\ge\;(\cwear+\eta)\,w_i\;}\tag{BE-NC}
\end{equation}
and the NVM-vs-RAM break-even (the fallback v1 dropped):
\begin{equation}\label{eq:benr}
  \boxed{\;(\mu_R+p_R)s_i-p_N s_i\;\ge\;(\cwear+\eta)\,w_i\;}\tag{BE-NR}
\end{equation}
i.e.\ NVM beats RAM iff its wear cost is below RAM's capacity-rent premium. The
endurance price $\eta$ is fixed by the budget binding,
$\sum_t\sum_{i\in N_t(\eta)}w_i=\Eend$, a one-dimensional monotone root-find
($N_t(\eta)$ shrinks as $\eta\uparrow$).

Each tier return $\Pi_i^{x}$ reads as a per-item income statement: revenue is the
discounted value of recalls ($\Vfast_i,\Vslow_i$), against RAM rent, NVM occupancy
$p_N s_i$, cash-wear depreciation $\cwear w_i$, the cloud's storage-plus-latency
charge, or the write-off cost of forgetting. Placement is then capital budgeting,
with $\eta$ the hurdle rate the scarce flash imposes. Only the NVM line carries
depreciation \emph{plus} scarcity rent $\eta\,w_i$---that single extra term is what
makes the optimum non-monotone (\cref{sec:nonmono}).

\subsection{Headline: clean non-monotonicity}\label{sec:nonmono}
Parameterize a value stratum by $v$, holding $\lambda,\delta,s$ at conditional
means; within it $w$ has conditional mean $\wbar(v)$ and locality $B(v)$. Two
structural facts drive everything: (1) \emph{locality is $O(v^0)$}: with
$\lambda,\delta,s$ frozen at their conditional means, $B(v)$ is flat in $v$ to
first order ($B'(v)\approx 0$). Freezing $\lambda$ here is a modeling choice, and
a load-bearing one: if retrieval rate co-varies \emph{positively} with value, as
the very recurrence mechanism that yields $\chi>0$ would suggest (valuable scenes
are re-observed more often), then $B(v)$ rises in $v$ and the down-crossing
weakens. We therefore carry $|B'(v)|<(\cwear+\eta)|\chi|$ as an explicit clause of
the existence condition \eqref{eq:iff} rather than assume it away, and flag
$d\lambda/dv$ as an unmeasured primitive that bounds the result. (2) \emph{wear
cost is strictly increasing in $v$ iff $\chi>0$}: the \eqref{eq:benc} RHS has
$v$-derivative $(\cwear+\eta)\chi$.

\begin{definition}[endurance threshold]\label{def:etabar}
Let $v_{\max}$ be the largest $v$ with $\wbar(v)>0$ and set
\begin{equation}\label{eq:etabar}
  \boxed{\;\bar\eta:=\frac{B(v_{\max})}{\wbar(v_{\max})}-\cwear\;}
\end{equation}
the smallest erase-cycle price at which the wear term overtakes locality at the
top of the value support (assumed $>0$; else the headline is vacuous and the
Phase-0 gate kills it).
\end{definition}

\begin{proposition}[non-monotone-in-value optimum: clean conditions]\label{prop:nonmono}
Assume \cref{ass:value,ass:size,ass:tiers,ass:endurance,ass:recompute}, the
RAM-slack bound of \cref{prop:ramslack}, and the empirically-verified antecedent
\cref{ass:assoc} with $\chi>0$. Then for every endurance price $\eta>\bar\eta$
(equivalently, $\Eend$ below the level that induces $\bar\eta$), the optimal
local-persistence probability $\Pr[x=N\mid v]$---a probability rather than a hard
indicator, because within each value stratum the remaining primitives
$(\lambda,\delta,s)$ vary and smooth the per-item threshold into a curve---is
\emph{strictly non-monotone} in $v$: it rises on a low-value interval and strictly
falls on a high-value interval. The interior down-crossing $\vdc$ solves
$B(\vdc)=(\cwear+\eta)\wbar(\vdc)$ and satisfies
$\partial\vdc/\partial\eta<0$ and $\partial\vdc/\partial\chi<0$.
\end{proposition}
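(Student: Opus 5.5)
The plan is to reduce the statement to the sign of a single scalar margin. Under the RAM-slack bound of \cref{prop:ramslack}, RAM is never the binding alternative for a stratum, so $\mu_R$ is small enough that \eqref{eq:benr} is not the operative comparison. NVM residency is then decided by \eqref{eq:benc} together with the forgetting comparison $\Pi^N\ge\Pi^\varnothing$. Within a value stratum $v$, I would define the NVM-versus-cloud margin
\[
D(v;\eta,\chi):=B(v)-(\cwear+\eta)\,\wbar(v),
\]
and write $\Pr[x=N\mid v]$ as the probability, over the residual variation in $(\lambda,\delta,s,w)$, that the per-item margin is nonnegative and that $\Pi^N\ge\Pi^\varnothing$ also holds. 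I would assume, as the smoothing language in the statement suggests, that this residual noise has a continuous law with positive density and no dependence on $v$ beyond the conditional means. Then $\Pr[x=N\mid v]$ is strictly increasing in whichever margin binds, so the monotonicity of the probability is read off from the monotonicity of the active margin.

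The argument for the high-value interval runs as follows.
\begin{enumerate}
\item Fact (2), together with the clause $|B'(v)|<(\cwear+\eta)\chi$ carried in the existence condition, gives
\[
D'(v)=B'(v)-(\cwear+\eta)\chi<0 \quad\text{whenever } \chi>0 .
\]
So the NVM-versus-cloud margin is strictly decreasing in $v$.
\item By \cref{def:etabar}, $\eta>\bar\eta$ is exactly $D(v_{\max})<0$.
\item For the interior crossing, I need some lower $v$ with $D(v)>0$. The natural candidate is the bottom of the support, where $\wbar$ is small, so $D>0$ there.
\item The intermediate value theorem and strict monotonicity then give a unique root $\vdc$ of $B=(\cwear+\eta)\wbar$.
\item Above $\vdc$, cloud dominates NVM at the stratum mean, and the probability of persisting strictly falls.
\end{enumerate}

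The rising low-value interval needs a different margin, namely NVM versus forgetting: $\Pi^N-\Pi^\varnothing=\Vfast(v)-p_Ns-(\cwear+\eta)\wbar(v)+\Pr[\text{needed}]\bar\kappa(v)$. At low $v$, forgetting rather than cloud is the relevant alternative, since $\Vslow$ is near zero and the cloud return is negative after $p_Cs$. $\Vfast$ is linear in $v$ with slope $\lambda/(1-\gamma e^{-\delta})$. \Cref{ass:recompute} keeps $\bar\kappa$ from inflating the high-value comparison and makes $\bar\kappa(v)/v$ well behaved near zero. So for small $v$ the derivative of this margin is dominated by the linear value term, and the margin is increasing. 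Hence on a low interval the persistence probability rises, as in \cref{prop:monotone}, before the cloud comparison takes over. Combining the two intervals gives strict non-monotonicity.

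The comparative statics follow from the implicit function theorem applied to $D(\vdc;\eta,\chi)=0$, using $\partial D/\partial v<0$:
\begin{itemize}
\item $\partial\vdc/\partial\eta=-\wbar(\vdc)/|\partial_vD|$, which is negative because $\wbar(\vdc)>0$.
\item $\partial\vdc/\partial\chi<0$, by parameterizing $\wbar(v)=\wbar(v_0)+\chi(v-v_0)$ locally around a fixed anchor $v_0$ below $\vdc$. Then $\partial D/\partial\chi=-(\cwear+\eta)(\vdc-v_0)<0$, and the same sign argument applies.
\end{itemize}

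\textbf{Main obstacle.} The hardest step is guaranteeing a genuinely rising interval below $\vdc$, that is, ordering the two crossings correctly. The forgetting threshold must lie strictly below $\vdc$, otherwise the probability could be decreasing everywhere. I would try to show this by evaluating both margins at $\vdc$. There $\Pi^N=\Pi^C$, and I want $\Pi^C>\Pi^\varnothing$, which holds when $\Vslow(\vdc)-p_Cs>-\Pr[\text{needed}]\bar\kappa$. I expect this to need a mild additional condition, plausibly $\eta$ not too large relative to $B$, or an explicit lower-support condition, which I would state as part of the existence condition \eqref{eq:iff}.
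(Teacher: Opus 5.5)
Your skeleton matches the paper's proof. The paper uses the same slack $g(v)=B(v)-(\cwear+\eta)\wbar(v)$ (your $D$), gets $g'<0$ from fact (2), places $\vdc$ at the zero of $g$ for $\eta>\bar\eta$, and applies the implicit function theorem. Its linearization $\wbar(v)=w_0+\chi v$ is your anchored one with $v_0=0$, so both statics coincide. One detail: the paper uses $B'\approx 0$ globally, while you use the $|B'|$ clause of \eqref{eq:iff}. That clause is stated only on the high-$v$ interval, so your uniqueness step needs it on all of $[v_{\min},v_{\max}]$. You are right, and more careful than the paper, that the rising interval must come from the NVM-versus-forgetting margin: $\Pi^N-\Pi^C=g$ is decreasing everywhere, so NVM never ``clears the cloud margin'' as $v$ grows. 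The paper does not resolve your main obstacle either. Its step (ii) simply asserts that $g>0$ and that persistence wins at moderate $v$, so there is no missing idea there to borrow.

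The gap is that two of your steps assert what actually needs a hypothesis, and both fail for large $\eta$, which ``every $\eta>\bar\eta$'' allows.
\begin{itemize}
\item \emph{Step 3.} If $\wbar(v_{\min})>0$, then for $\eta>B(v_{\min})/\wbar(v_{\min})-\cwear$ you get $D<0$ on the whole support and no interior root. This value lies above $\bar\eta$, since $B$ is flat and $\wbar$ is increasing.
\item \emph{The low-$v$ step.} The $v$-slope of $\Pi^N-\Pi^\varnothing$ is $\lambda/(1-\gamma e^{-\delta})-(\cwear+\eta)\chi+\Pr[\text{needed}]\bar\kappa'(v)$, and nothing makes the value term dominate. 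Once $(\cwear+\eta)\chi$ exceeds $\lambda/(1-\gamma e^{-\delta})+\Pr[\text{needed}]\bar\kappa'(v)$ on the support, NVM's margins against cloud, forgetting and RAM are all non-increasing in $v$. Under your own smoothing model, $\Pr[x=N\mid v]$ is then non-increasing and there is no rising interval.
\end{itemize}
So the result can only hold on a window $\bar\eta<\eta<\eta^\ast$. Your plan to add a condition to \eqref{eq:iff} is the right repair, provided it covers both failures.

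For the rise, replace the derivative-dominance claim with a sign-change argument at stratum means. Suppose forgetting beats NVM at $v_{\min}$ and $\Pi^C(\vdc)>\Pi^\varnothing(\vdc)$ (your ordering condition). Then continuity makes NVM optimal just below $\vdc$ but not at $v_{\min}$. That gives rise-then-fall without any monotonicity of the forgetting margin.

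Your dismissal of RAM is argued backwards. Since $\Pi^R-\Pi^N=(\cwear+\eta)\wbar(v)-(\mu_R+p_R-p_N)s$, a small $\mu_R$ makes RAM \emph{more} competitive. The bound $\mu_R<\mu_R^{\max}$ in \cref{prop:ramslack} says precisely that RAM beats NVM at $\vdc$. Because this margin also rises in $v$, NVM would then be lost to RAM before $\vdc$. For $\vdc$ to be the operative crossing, you need \eqref{eq:benr} to hold through $\vdc$, i.e.\ $(\mu_R+p_R-p_N)s\ge(\cwear+\eta)\wbar(\vdc)$. The calibrated RAM premium supplies this; the paper's proof simply leaves RAM out.
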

\noindent The down-crossing exists \textbf{iff}
\begin{equation}\label{eq:iff}
  \boxed{\;(\text{i})\ \eta>\bar\eta\ \text{AND}\
  (\text{ii})\ \chi=\tfrac{d}{dv}\E[w\mid v]>0,\
  \text{with}\ |B'(v)|<(\cwear+\eta)|\chi|\ \text{on the high-}v\ \text{interval}.\;}
\end{equation}
Neither condition alone suffices: (i) without (ii) is the monotone index
(\cref{prop:monotone}); (ii) without (i) gives $\eta=0$, wear bounded by
$\cwear\wbar(v)$, so for small $\cwear$ no down-crossing: the falsifiable
knife-edge the Phase-0 gate decides. \emph{(Proof in \cref{app:proofs}. Tested:
value-stratified persist regression, high-$v$ negative coefficient at $\eta>0$,
vanishing at $\Eend\to\infty$.)}

\subsection{The RAM-capacity multiplier survival bound}\label{sec:ramslack}
\begin{proposition}[down-crossing exists $\forall\mu_R$; location shifts left via endogenous $\eta$]\label{prop:ramslack}
The item ejected from NVM at $\vdc$ lands in RAM iff its wear cost exceeds
RAM's capacity-rent premium,
$(\cwear+\eta)\wbar(\vdc)>(\mu_R+p_R-p_N)s$; if $\mu_R$ is high enough that
the reverse holds for all $v$, the ejected item routes to cloud $\Pi^C$ or
recompute $\Pi^\varnothing$ (cheap by \cref{ass:recompute}). The down-crossing
\emph{exists for every} $\mu_R\ge 0$; $\mu_R$ controls only the \emph{destination}
of the spared endurance, with RAM-slack bound
$\mu_R<\mu_R^{\max}:=(\cwear+\eta)\wbar(\vdc)/s-(p_R-p_N)$. Because $\eta$ is
endogenous, raising $\mu_R$ pushes RAM-bound items into NVM (BE-NR), raising
erase demand and hence $\eta$, and since $\partial\vdc/\partial\eta<0$ the
crossing $\vdc(\mu_R)$ moves \emph{left} as $\mu_R\uparrow$.
\end{proposition}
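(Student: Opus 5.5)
The plan is to work item-by-item at the fixed multipliers $(\mu_R,\eta)$, comparing the tier returns \eqref{eq:PiRN}--\eqref{eq:PiCnull}. I first show that the down-crossing $\vdc$ is defined without reference to $\mu_R$, and only afterwards let $\eta$ respond to $\mu_R$ through the endurance budget. Fix a value stratum with $(\lambda,\delta,s)$ at their conditional means. In \eqref{eq:PiRN}--\eqref{eq:PiCnull}, $\mu_R$ enters only $\Pi^R$, and it does so additively. The NVM-versus-cloud comparison \eqref{eq:benc} therefore contains no $\mu_R$. The root $\vdc$ of $B(v)=(\cwear+\eta)\wbar(v)$ is consequently the same object as in \cref{prop:nonmono}. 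It exists under the conditions \eqref{eq:iff}, none of which involve $\mu_R$, and it does so for every $\mu_R\ge0$.

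It remains to show that NVM residency actually falls past $\vdc$ whatever value $\mu_R$ takes. Increasing $\mu_R$ only lowers $\Pi^R$, so it can only shrink the set of items for which RAM beats NVM. The NVM share at a given $v$ is therefore the event $\Pi^N\ge\max(\Pi^C,\Pi^\varnothing,\Pi^R)$. For $v>\vdc$ the condition $\Pi^N<\Pi^C$ already fails the first comparison, irrespective of RAM, so persistence falls on the high-value interval for all $\mu_R$. This establishes existence. Destination is then a case split on the ejected item at $\vdc$. By \eqref{eq:benr}, RAM beats NVM iff $(\cwear+\eta)\wbar(\vdc)>(\mu_R+p_R-p_N)s$, and rearranging gives exactly $\mu_R<\mu_R^{\max}$. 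If instead $\mu_R\ge\mu_R^{\max}$, the item goes to $\arg\max(\Pi^C,\Pi^\varnothing)$. I will invoke \cref{ass:recompute} to argue that $\Pi^\varnothing$ is a competitive fallback: $\bar\kappa(v)/v$ is non-increasing, so the write-off cost does not blow up at high $v$. This part is bookkeeping only.

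The comparative static $d\vdc/d\mu_R<0$ is the step I expect to be the main obstacle, because $\eta$ is an equilibrium object. I would write $\eta(\mu_R)$ as the root of $G(\eta,\mu_R):=\sum_t\sum_{i\in N_t(\eta,\mu_R)}w_i-\Eend=0$. The argument has three parts. First, by the RAM-versus-NVM comparison, $N_t(\eta,\mu_R)$ is weakly increasing in $\mu_R$ as a set, so $\partial G/\partial\mu_R\ge0$. Second, $N_t$ is decreasing in $\eta$, so $\partial G/\partial\eta<0$ when the budget binds. Third, monotone comparative statics (or the implicit function theorem, where the smoothing over $(\lambda,\delta,s)$ makes $G$ differentiable) then give $d\eta/d\mu_R\ge0$. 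Composing with $\partial\vdc/\partial\eta<0$ from \cref{prop:nonmono} yields $d\vdc/d\mu_R=(\partial\vdc/\partial\eta)(d\eta/d\mu_R)\le0$.

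The delicate points are twofold. One is strictness, which needs positive mass of items on the BE-NR margin. The other is the possibility of corner solutions where the budget is slack, so that $\eta=0$ and the crossing does not move. I would state the leftward shift as weak in general and strict under a positive-density assumption at the BE-NR boundary. I would also verify that $\eta$ stays above $\bar\eta$ along the path, which holds automatically because $\eta$ only rises, so that existence is preserved.
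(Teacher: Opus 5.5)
Your proposal is correct and follows the paper's own proof essentially step for step: $\mu_R$ is absent from \eqref{eq:benc}, so the root $\vdc$ and the vanishing of NVM residency past it are $\mu_R$-free; $\mu_R$ only relabels the receiver of the ejected item through \eqref{eq:benr}; and the leftward shift comes from the budget identity pushing $\eta$ up, composed with $\partial\vdc/\partial\eta<0$. Your extra bookkeeping makes explicit what the paper's sketch leaves implicit: defining $\eta(\mu_R)$ as the root of the budget identity and applying monotone comparative statics, separating weak from strict shifts and the slack-budget corner, and keeping $\eta>\bar\eta$ along the path (anchor that bound at $\mu_R=0$, since monotonicity of $\eta(\mu_R)$ only propagates it upward).
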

\noindent\emph{(Proof in \cref{app:proofs}. Tested: RAM-pressure sweep;
down-crossing persists, $\vdc$ shifts left, destination RAM$\to$cloud as
$C_R\downarrow$.)}

\Cref{fig:phase} is the visual statement of
Props.~\ref{prop:monotone}--\ref{prop:ramslack}: the rise-then-fall persist
curve with the down-crossing $\vdc$ marked, the faint $\eta=0$ monotone reference, and the
placement-region backdrop.

\begin{figure}[t]
  \centering
  \includegraphics[width=0.86\linewidth]{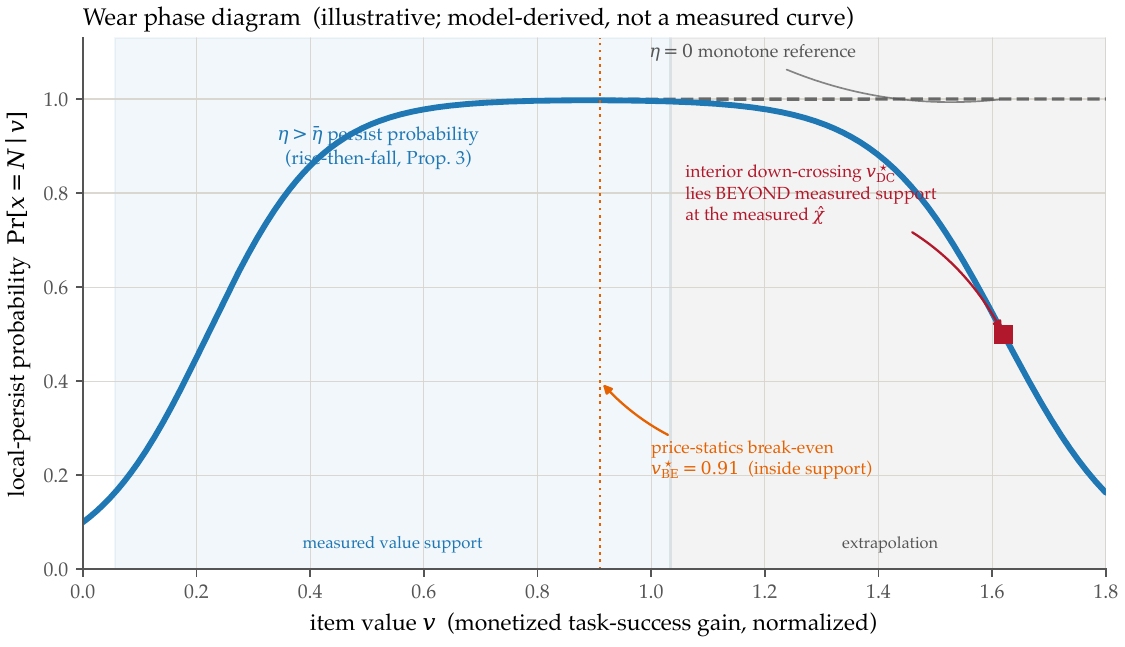}
  \caption{\textbf{Wear phase diagram (illustrative; model-derived).}
  Headline subject: the $\eta>\bar\eta$ persist-probability curve (rise-then-fall
  in value $v$), whose theory interior down-crossing $\vdc$ lies
  \emph{beyond} the normalized value support at the measured point estimate
  (\cref{sec:calibration}); the faint
  $\eta=0$ monotone step (\cref{prop:monotone}) is the contrast reference. The
  annotated marker is the price-statics \emph{break-even} $\vbe=0.91$ (the
  persist/evict crossover \emph{inside} support $[0.056,1.034]$); it and the
  ``do-NOT-persist-locally'' band boundary are
  read from the real price-statics overlay; the
  rise-then-fall persist-probability \emph{curve} is the closed-form shape of
  \cref{prop:nonmono} and is \emph{model-derived, not a measured empirical curve}. A measured
  value-stratified persist regression is not available: the offline H1 test (the
  non-monotone-persistence hypothesis) was vacuous at the non-binding S0 regime
  (today's base prices), and the binding-regime closed-loop eval
  (\cref{sec:results-binding}) returned a flat high-value persistence slope
  ($H1\ p=1.0$ in the primary cell), so it does not support the curve. We
  therefore present the curve as an illustration of the theory, not an empirical
  claim.}
  \label{fig:phase}
\end{figure}

\subsection{Hotelling: a bounded caveat}\label{sec:hotelling}
The endurance constraint is formally an exhaustible-stock
problem~\citep{hotelling1931economics}, but that scaffold justifies one fact only:
an erase cycle spent today is unavailable tomorrow, so its user cost carries a
present-value scarcity rent $\eta$---and this rent, not the cash wear $\cwear$,
makes placement multi-period. We do \emph{not} claim the path
$\eta_t=\eta_0(1+r)^t$ (with a positive cash term only the rent grows at $r$;
endurance is partially recoverable via wear-leveling; non-stationary retrieval
makes $\eta$ non-closed-form ex ante). The operative object is the dual of a
finite-horizon constrained (PO)MDP, estimated as $\widehat\eta$. As noted in
\cref{sec:rw-econ}, pricing memory as depreciating capital is not new in the token
setting; what is new is that $\eta$ here is the dual of a \emph{physical}
non-renewable P/E stock (\cref{ass:endurance}) under multi-tier energy-budgeted
placement.

\subsection{cs.CY corollary: endurance rent \texorpdfstring{$\to$}{->} device lifetime \texorpdfstring{$\to$}{->} fleet e-waste}\label{sec:cscy}
The rent $\eta$ is also a \emph{device-lifetime price}. Spending an erase cycle
consumes a fixed fraction $1/\Eend$ of NAND life; at the budget binding the
policy's cumulative erases map one-to-one to calendar lifetime.

\begin{corollary}[bounded lifetime lever]\label{cor:lifetime}
Lowering fleet-wide NVM-erase demand by a fraction $q$ (the controller's saving
vs.\ an endurance-blind baseline) extends flash-limited device life by
$\approx q$ to first order, deferring replacement embodied carbon at the
$\approx 22$~kg~CO$_2$e/TB NAND anchor~\citep{weppe2025nandcarbon}. This is a
\emph{directional, not magnitude-certified} lever: replacement is multi-causal;
SSDs are the dominant and growing component of device
carbon~\citep{gupta2022act,schneider2025lifecycle}, end-of-use $\ne$
end-of-life, and lifetime-extension can overstate real
savings~\citep{bashir2023embodiedpitfalls,switzer2022junkyard}.
\end{corollary}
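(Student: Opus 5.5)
The plan is to treat \cref{cor:lifetime} as a first-order accounting identity built on \cref{ass:endurance}, rather than as an optimization result. The quantitative claim is that life extends by $\approx q$. The caveats after it (multi-causal replacement, end-of-use $\ne$ end-of-life) are scope statements, not things to prove, so the proof only has to establish the directional first-order lever and state precisely where the approximation enters.

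\textbf{Step 1 (lifetime as stock over flow).} In a stationary operating regime, let the fleet's per-period NVM erase demand under a policy be $D=\E\big[\sum_{i\in N_t}w_i\big]$. By \cref{ass:endurance}, the flash-limited lifetime is the first $T$ at which cumulative erases reach $\Eend$, so $L=\Eend/D$ to leading order. Wear-leveling, which spreads erases evenly across blocks, justifies treating the stock as pooled; its partial recoverability is the ``partially recoverable'' caveat of \cref{sec:hotelling}. The statement $1/\Eend$ of life per erase in \cref{sec:cscy} is exactly this identity.

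\textbf{Step 2 (compare policies).} Let the endurance-blind baseline have demand $D_0$, and let the controller have $D_1=(1-q)D_0$. Then
\[
L_1/L_0=1/(1-q)=1+q+O(q^2),
\]
so the relative life extension is $q$ to first order, and it is exact as a lower bound since $1/(1-q)\ge 1+q$. Multiplying the deferred replacement capacity by the $22$~kg CO$_2$e/TB anchor gives the carbon statement. That step is a unit conversion and needs no argument.

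\textbf{Step 3 (the main obstacle: non-stationarity).} The delicate point is that $D$ need not be constant over the life. If the rent $\eta$ re-clears (\cref{sec:pricestatics}) or retrieval drifts, then $L$ solves $\sum_{t<L}D_t=\Eend$ rather than $L=\Eend/D$. I would handle this by assuming the saving is uniform, $D_{1,t}=(1-q)D_{0,t}$, and bounding the cumulative demand between $\min_t D_t$ and $\max_t D_t$. This gives the extension within a factor of the demand ratio, which is precisely why the corollary is labelled directional rather than magnitude-certified.

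I would also note the one subtlety when the budget binds under the controller. There the horizon is fixed and erases exhaust $\Eend$ regardless, so the saving appears instead as a lower $\eta$. Lifetime extension therefore applies in the regime where the device is run until flash-limited failure, which is how the corollary is stated.
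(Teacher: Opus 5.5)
Your Steps 1--2 are the paper's argument. The paper gives no appendix proof of \cref{cor:lifetime}. It rests the corollary on the remark in \cref{sec:cscy} that each erase consumes a fixed fraction $1/\Eend$ of NAND life and that, at the budget binding, cumulative erases map one-to-one onto calendar lifetime. That is your $L=\Eend/D$ and $L_1/L_0=1/(1-q)=1+q+O(q^2)$. Two of your additions are correct and go beyond the paper: the exact extension $q/(1-q)$ dominates $q$, and under a uniform saving the Step~3 relative extension lies within a factor $\max_t D_t/\min_t D_t$ of $q/(1-q)$.

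Your closing paragraph, however, inverts the role of binding. The corollary compares the controller with an \emph{endurance-blind} baseline. Suppose the budget binds under the controller over horizon $T$. Complementary slackness gives $D_1=\Eend/T$. For any $q>0$ the baseline then spends at $D_0=D_1/(1-q)>\Eend/T$ and reaches flash end-of-life at $L_0=\Eend/D_0<T$, while the controller lasts to $T$. Hence $L_1/L_0=T/L_0=1/(1-q)$ holds exactly. Binding is the case the paper invokes to make erases map onto lifetime, and it is where the paper applies the lever (the $\le 9.5\%$ ceiling ``when endurance is tight''). Nor does the saving show up as a lower $\eta$: relative to a baseline that acts as if $\eta=0$, the controller saves erases \emph{because} $\eta>0$.

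The effect you have in mind is real: cumulative erases pinned at $\Eend$, so a shift is absorbed by $\eta$ re-clearing rather than by longer life. It is the paper's own price-statics observation that $\sigma_N$ is budget-pinned once $\eta>0$ and the price signal lives in $\eta$ (\cref{sec:results-price}). But it compares two budget-feasible optima under a parameter change. For this corollary it only says the extension is capped at $T/L_0$. Rephrase that paragraph and the proposal matches the paper.
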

\noindent So $\eta$ wires the core model to a falsifiable cs.CY claim,
\emph{cost-optimal forgetting is also lifetime-extending}, without a precise
carbon number (reported as a TCO/carbon implication, not an experimental
target).

\subsection{Price comparative statics}\label{sec:pricestatics}
Placement is a function of the cloud-tier price vector
$\mathbf p=(p_{\mathrm{HBM}},p_{\mathrm{DDR}},p_{\mathrm{NAND}},
p_{\mathrm{egress}},p_{\mathrm{energy}})$: RAM rent $p_R$ tracks DDR/LPDDR
\$/GB; $\cwear,p_N$ track enterprise NAND \$/GB and \$/P-E-cycle; $p_C$ and
$\ell\pi+e_C$ track egress \$/GB and energy \$/kWh; $\eta$ is endogenous. We sign
the statics over a three-point band (\cref{tab:priceband}), never a point
estimate, since memory prices are oligopoly/contract-set
\citep{trendforce2025asp,counterpoint2025serverdram}. The same Low/Base/High
band is used everywhere a price enters the paper.

\begin{table}[t]
\centering
\caption{Unified price scenario band, used identically in the model, the
calibration, and the pre-specified experiment plan. Anchors:
\citet{counterpoint2025serverdram,trendforce2025asp,elinfor2026nand}.}
\label{tab:priceband}
\small
\setlength{\tabcolsep}{4pt}
\begin{tabular}{lcccl}
\toprule
Scenario & DDR5 \$/GB & NAND TLC \$/GB & egress \$/GB & basis\\
\midrule
S0 $\equiv$ Base (today)    & $\sim 9$  & $\sim 0.13$ & $\sim 0.09$ & current spot \citep{counterpoint2025serverdram}\\
S1 $\equiv$ High (DRAM cycle)   & $\sim 16$ & $\sim 0.18$ & $\sim 0.09$ & DRAM-led tightening \citep{trendforce2025asp}\\
S2 $\equiv$ High (NAND shock)   & $\sim 12$ & $\sim 0.22$ & $\sim 0.09$ & NAND-led tightening \citep{elinfor2026nand}\\
Low (floor)            & $3$       & $0.06$      & $0.09$      & low-price boundary\\
\bottomrule
\end{tabular}
\end{table}

\begin{proposition}[signed statics P5a--d]\label{prop:statics}
Writing $\sigma_N$ for the population NVM share and $\Theta$ for the NVM-persist
value-threshold, each sign follows from \eqref{eq:benc}/\eqref{eq:benr} plus the
budget identity fixing $\eta$:
\begin{description}[leftmargin=2.2em,itemsep=2pt]
\item[\textnormal{P5a}] $p_{\mathrm{NAND}}\uparrow$: persist region shrinks,
$\partial\sigma_N/\partial p_{\mathrm{NAND}}<0$ and
$\partial\eta/\partial p_{\mathrm{NAND}}<0$ (NVM$\downarrow$, $\eta\downarrow$; clean).
\item[\textnormal{P5b}] $p_{\mathrm{DDR}}\uparrow$: \emph{conditional on an
interior RAM allocation}, (BE-NR) tilts toward NVM and
$\partial\eta/\partial p_{\mathrm{DDR}}\ge 0$. \textbf{Under binding RAM the
static is (near-)zero}: the RAM shadow price $\mu_R$ absorbs the entire
$p_{\mathrm{DDR}}$ shock one-for-one, so it does not propagate to the endurance
margin ($d\eta\approx 0$). Empirically this is the operative regime
(\cref{sec:results}), so P5b returns an \emph{inconclusive null}: a clean
identification of the boundary condition under which the proven sign is
observable, not a contradiction.
\item[\textnormal{P5c}] $p_{\mathrm{egress}},p_{\mathrm{energy}}\uparrow$: cloud
dearer, $B(v)\uparrow$, more onboard, $\eta\uparrow$ ($\sigma_N\uparrow$ first
order; threshold move directional, not magnitude-signed).
\item[\textnormal{P5d}] $\Eend\uparrow$: $\partial\eta/\partial\Eend<0$; as
$\eta\to 0$ we recover \cref{prop:monotone} and non-monotonicity vanishes
(clean boundary; a small-endurance phenomenon).
\end{description}
\end{proposition}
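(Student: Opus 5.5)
The plan is to treat each of P5a--d as a comparative-statics exercise on a two-layer system. The inner layer is the item-level placement rule given $\eta$: the break-evens \eqref{eq:benc} and \eqref{eq:benr}, together with the comparisons against $\Pi^\varnothing$. The outer layer is the scalar budget identity
\[
D(\eta;\mathbf p):=\E\Big[\sum_t\sum_{i\in N_t(\eta;\mathbf p)} w_i\Big]=\Eend .
\]
First I would show that $D$ is weakly decreasing in $\eta$. Raising $\eta$ lowers $\Pi^N_i$ alone and leaves $\Pi^R_i,\Pi^C_i,\Pi^\varnothing_i$ unchanged, so $N_t(\eta)$ shrinks, which is the monotone root-find noted after \eqref{eq:benr}. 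After smoothing over the within-stratum dispersion of $(\lambda,\delta,s)$, as in \cref{prop:nonmono}, I would take $D_\eta<0$. Then the implicit function theorem gives, for any primitive $z$,
\[
\frac{\partial\eta}{\partial z}=-\frac{D_z}{D_\eta},\qquad \sign\frac{\partial\eta}{\partial z}=\sign D_z .
\]
This reduces every $\eta$-sign to the direction in which $z$ moves the NVM demand at fixed $\eta$. The effect on the share $\sigma_N$ then follows from combining the direct shift with the feedback through $\eta$.

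Next come the cases in turn. For P5a, $p_N$ and $\cwear$ both rise with $p_{\mathrm{NAND}}$. This lowers $\Pi^N_i$ relative to every other tier (note $(p_C-p_N)s_i$ in $B_i$ falls), so $D_z<0$, hence $\partial\eta/\partial p_{\mathrm{NAND}}<0$. The fall in $\sigma_N$ comes from the direct effect. The induced drop in $\eta$ only partly offsets it, because on a binding budget $\sum w$ over $N_t$ is pinned at $\Eend$. I expect to argue through the demand curve at fixed $\eta$ with $D_\eta<0$ rather than through a closed form.

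For P5c, higher egress and energy prices raise $\ell\pi+e_C$ and $p_C$, so $B_i\uparrow$ item by item, $D_z>0$, and $\eta\uparrow$. The $\sigma_N$ sign then holds at first order, and I would state the threshold move only directionally.

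For P5d, $D$ does not depend on $\Eend$, so differentiating $D(\eta)=\Eend$ gives $\partial\eta/\partial\Eend=1/D_\eta<0$. Once $\Eend$ exceeds $D(0)$ the multiplier is $\eta=0$ by complementary slackness. \cref{prop:monotone} then applies, and \eqref{eq:iff}(i) fails, so the non-monotonicity of \cref{prop:nonmono} vanishes.

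The main obstacle is P5b, because $p_{\mathrm{DDR}}$ enters only through $(\mu_R+p_R)$ and $\mu_R$ is itself endogenous. In the interior-RAM case ($\mu_R=0$), a higher $p_R$ raises the left side of \eqref{eq:benr}. Items migrate from RAM to NVM, so $D_z\ge 0$ and $\partial\eta/\partial p_{\mathrm{DDR}}\ge 0$. Under binding RAM, I would argue that the RAM-capacity constraint fixes the RAM set by quantity, not price. If the same items remain optimal in RAM, complementary slackness lets $\mu_R$ adjust by $d\mu_R=-dp_R$, which keeps $\mu_R+p_R$, and hence every tier comparison, unchanged. So $D_z=0$ and $d\eta\approx 0$. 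The delicate point is justifying that the RAM set is locally invariant, so that the multiplier absorbs the shock one-for-one. I would first try a generic-position argument: there is no item exactly at a RAM tie, and $\mu_R>0$ leaves a band large enough to absorb a small $dp_R$ before $\mu_R$ hits zero. The result would hold to first order, which is why the statement says ``(near-)zero''.
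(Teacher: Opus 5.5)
Your route is the paper's own. Its sketch also signs each static by implicitly differentiating the budget identity $\sum_t\sum_{i\in N_t}w_i=\Eend$, with \eqref{eq:benc}/\eqref{eq:benr} supplying the fixed-$\eta$ shift in NVM demand. Your reduction $\sign(\partial\eta/\partial z)=\sign D_z$ correctly delivers the $\eta$-signs of P5a, P5c, P5d and both branches of P5b.

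The step that fails is the share claim in P5a. You say the induced fall in $\eta$ ``only partly offsets'' the direct fall in NVM use \emph{because} $\sum w$ over $N_t$ is pinned at $\Eend$. Pinning implies the opposite. If the budget binds at both prices, $D(\eta;\mathbf p)=\Eend$ holds before and after the shock, so the $\eta$-feedback restores total NVM erase demand \emph{exactly}. $\sigma_N$ can then move only through composition. The $p_N s_i$ term ejects large-$s$ items while the change in $\cwear+\eta$ re-sorts by $w_i$. Since $\sum w$ is unchanged, the count falls only if entrants carry more erases per item than leavers, and neither break-even signs that. With near-homogeneous $w$ the effect is zero to first order, because the NVM count is pinned at $\approx\Eend/\wbar$ item-periods. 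This is exactly what the paper's results section reports: the $\sigma_N$ partials are ``inconclusive by construction''.

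A strict $\partial\sigma_N/\partial p_{\mathrm{NAND}}<0$ therefore comes only from a slack budget. There $\eta\equiv0$, so $\partial\eta/\partial p_{\mathrm{NAND}}=0$, not $<0$. The two strict inequalities of P5a sit in different regimes, and your argument cannot deliver them jointly. The same cancellation removes your ``$\sigma_N\uparrow$ at first order'' in P5c, unless ``first order'' means the fixed-$\eta$ effect. To close the gap, you can do one of three things:
\begin{enumerate}
\item State P5a regime by regime. Slack: $\sigma_N\downarrow$ and $\eta$ stays $0$. Binding: $\eta\downarrow$ strictly, and $\sigma_N$ is pinned up to an unsigned composition term.
\item Read the $\sigma_N$ signs explicitly as fixed-$\eta$ effects.
\item Add a condition on the marginal items' joint $(s,w)$ law that signs the composition term.
\end{enumerate}
The paper's sketch does not resolve this either; it simply labels P5a ``clean''.

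Two smaller points. First, your $D_\eta<0$ step asserts that raising $\eta$ leaves $\Pi^R_i$ unchanged. When RAM binds, $\mu_R$ moves with $\eta$ (the feedback of \cref{prop:ramslack}). The robust justification keeps the RAM and power constraints inside the relaxed problem and uses revealed preference. For $\eta_1<\eta_2$ with optimal erase totals $W_1,W_2$, adding the two optimality inequalities gives $(\eta_2-\eta_1)(W_1-W_2)\ge0$. So $W$ is non-increasing in $\eta$ whatever $\mu_R$ does.

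Second, the ``delicate point'' in P5b needs no generic-position argument. $p_R$ and $\mu_R$ enter only through $\mu_R+p_R$. So if $(x,\mu_R,\eta)$ satisfies the Lagrangian sufficiency conditions at $p_R$ with $\mu_R>0$, then $(x,\mu_R-dp,\eta)$ satisfies them at $p_R+dp$ for every $0<dp\le\mu_R$. Complementary slackness survives because RAM was already binding. This gives $d\eta=0$ exactly on that range, not only to first order.
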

\noindent\emph{(Proof in \cref{app:proofs}. Tested: price-band sweep,
re-solve $\eta(\mathbf p),\sigma_N(\mathbf p)$ at S0/S1/S2.)}

\paragraph{A falsifiable conjecture (cross-partial).}
Bandwidth $b$ enters only $B(v)$, so extra bandwidth diverts writes to cloud,
relaxes endurance, lowers $\eta$, and lifts the value of remaining onboard items.
This predicts
\begin{equation}\label{eq:xpartial}
  \frac{\partial^2(\text{fleet task value})}{\partial b\,\partial(1/\Eend)}\;>\;0,
\end{equation}
i.e.\ the marginal task-value of bandwidth rises with endurance tightness---for
memory-bound fleets on cheap NAND, buying radio is partly buying flash lifetime. We
state this as a model-predicted conjecture (H6, \cref{tab:hypotheses}), not a
theorem: it is not derived in \cref{prop:statics}, and its mechanism has two
opposing channels. The empirical sweep (\cref{sec:results-price}) returns $+0.50$
with a bootstrap CI that straddles zero ($[-0.34,+1.25]$), so we report it as
directional, not confirmed.

\subsection{Calibration: measured primitives in the model}\label{sec:calibration}
We plug the \emph{measured} primitives ($\hat\delta,\etamark,\hat\chi$) and the
datasheet cost constants into \crefrange{eq:Vlocal}{eq:xpartial}. \textbf{Scope (binds
every number in this subsection):} the canonical $\hat\chi=+1.016\times10^{-3}$, $95\%$
physical-scene-clustered CI $[+3.81\times10^{-4},+1.65\times10^{-3}]$ ($n{=}3{,}032$,
$379$ clusters), is the \textbf{LIBERO-LONG, SmolVLA-0.5B} headline; on a
non-recurrent teleoperation distribution \textbf{DROID measures a significantly
\emph{negative}} slope at high power ($\hat\chi=-8.95\times10^{-3}$, CI
$[-1.61\times10^{-2},-4.09\times10^{-3}]$), a measured opposite-sign regime, so
every calibrated \emph{association} magnitude is recurrent-regime-scoped.
$\hat\delta=0.032$/step (half-life $21.7$ steps, CI $[16.2,29.6]$, $R^2{=}0.997$) is the
DROID recurrence-kernel anchor, used \emph{only as a value-decay timescale}. \textbf{We
mix a DROID-anchored $\hat\delta$ with the LIBERO-scoped $\hat\chi$ and flag the mix
as such}: we do \emph{not} claim $\delta$ is regime-robust. The LIBERO state-kernel
$\delta\approx0.0004$ measures state \emph{persistence}, not value depreciation, so it
cannot serve; the LIBERO \emph{value-decay} $\delta\approx0.054$ (sparse, $R^2{=}0.15$)
is the same-regime quantity but is $\sim$$69\%$ larger than the DROID $0.032$ and, used
instead, would give $M\approx16\times$ (half-life $\approx13$ steps) rather than
$24.3\times$. We headline the DROID-anchored value $M=24.3\times$ for its far tighter
fit ($R^2{=}0.997$ vs.\ $0.15$) but flag that this is a fit-quality choice, not evidence
of cross-regime stability of the value-decay timescale; the LIBERO value-decay
$M\approx16\times$ is the same-regime robustness alternative.
$\etamark=0.0503$ is the normalized endurance scarcity markup solved on the
pooled Phase-0 logs under a capped budget (binds at $\le 0.75\times$ measured
write demand, flat across binding budgets); at uncapped datasheet conditions the
budget does not bind and the markup is zero. \textbf{Two duals, two unit systems:} $\etamark$ is the Phase-0 normalized
scarcity-markup ratio ($+5.0\%$ on the cash wear of one erase; Jorgenson user cost
$=\cwear(1+\etamark)$, a unit-free indicator that endurance binds), while
$\etasim\approx2.4\times10^{-4}$ in the price-statics results
(\cref{sec:results-price}) is a separately-solved equilibrium dual in the
simulator's objective units. They are the same \emph{kind} of object---the dual on
$\sum\sum w\le\Eend$---in two different objectives, so we do not compare them
numerically; the bare symbol $\eta$ is reserved for the model-theoretic shadow
price.

\paragraph{Depreciation and the locality multiplier.} With $\gamma=0.99$,
$D=1-\gamma e^{-\hat\delta}=0.0412$ (CI $[0.0329,0.0516]$), so a persisted item is worth
$M=1/D=\mathbf{24.3\times}$ its per-step value (CI $[19.4,30.4]$).

\paragraph{Wear cost and the endurance user cost.} The datasheet anchors (TLC
$3{,}000$~P/E, WAF~$3$, $128$~GB module) give TBW${=}128$~TB and a base cash wear of
$\mathbf{\$0.13}$~per~TB-written, i.e.\ $5.55\times10^{-6}$~\$/(block P/E). The Jorgenson
user cost of one erase, $\cwear(1+\etamark)$, is $\mathbf{5.83\times10^{-6}}$
\$/(block P/E) at base NAND (band $[2.69,9.86]\times10^{-6}$ over the Low/Base/High
NAND price band, \cref{tab:priceband}); the rent, not the cash term, makes
placement multi-period.

\paragraph{Calibrated break-even (BE-NR).} \eqref{eq:benr} with slack RAM
($\mu_R{=}0$) and the base DDR--NAND gap ($\$8.87$/GB capacity-rent premium) yields:
\emph{$\approx \$4.8\times10^{-5}$ of task value per GB-day justifies NVM persistence at
base NAND} ($\$0.13$/TB cash wear $+\,5\%$ rent). The RAM premium exceeds wear cost by
$\sim 5$ orders of magnitude: at measured intensities capacity rent, not wear, evicts
from RAM; wear bites only once endurance binds ($\eta>0$), the regime we isolate.

\paragraph{Calibrated down-crossing $\vdc$: outside support.} Our data do not pin
the locality level $B$ independently, so we do not headline a number for $\vdc$ (the
point estimate $\vdc=\chi_{\mathrm{hi}}/\hat\chi\approx1.62$ is an anchoring
identity, not a $B$-grounded measurement). The anchoring-invariant conclusion is
that \textbf{at the measured $\hat\chi$ the interior down-crossing lies beyond the
measured value support}, reaching the support edge only at the upper end of the
$\chi$-CI. The non-monotone optimum is therefore sign-correct but
\textbf{quantitatively dormant at measured intensities}, becoming empirically live
only under tighter endurance. It is a distinct object from the price-statics
break-even $\vbe{=}0.91$ (\cref{sec:results-price}). This is the headline
calibration caveat.

\paragraph{Calibrated cs.CY lever (bounded).} The measured write-intensity spread
($\approx 9.5\%$ of mean) bounds the divertible erases: device-life extension under the
index policy vs.\ LRU (\cref{cor:lifetime}) ranges from $\approx 0\%$ at the measured
$\hat\chi$ (since $\vdc$ is beyond support) up to a $\le 9.5\%$ ceiling when endurance
is tight enough to pull $\vdc$ interior, deferring $\le 2.82$~kg~CO$_2$e per $128$~GB
module life at the Weppe anchor~\citep{weppe2025nandcarbon}: directional, not
magnitude-certified.

\paragraph{Calibrated cross-partial (H6).} The conjecture of \eqref{eq:xpartial}
calibrates to a directional, CI-inconclusive estimate: flipping one item from NVM to
cloud spares $\bar w\approx 1.05$ P/E-cycles ($\approx 2.9\times10^{-7}$\$\ of
endurance relief at base NAND), and the swept price-grid estimate is $+0.50$ with CI
$[-0.34,+1.25]$ (\cref{sec:results-price}). We report it as directional---for
memory-bound fleets on cheap NAND, buying radio is partly buying flash
lifetime---not confirmed; full CI propagation ships in the artifact
(\cref{app:repro}).

\begin{figure}[t]
  \centering
  \includegraphics[width=0.92\linewidth]{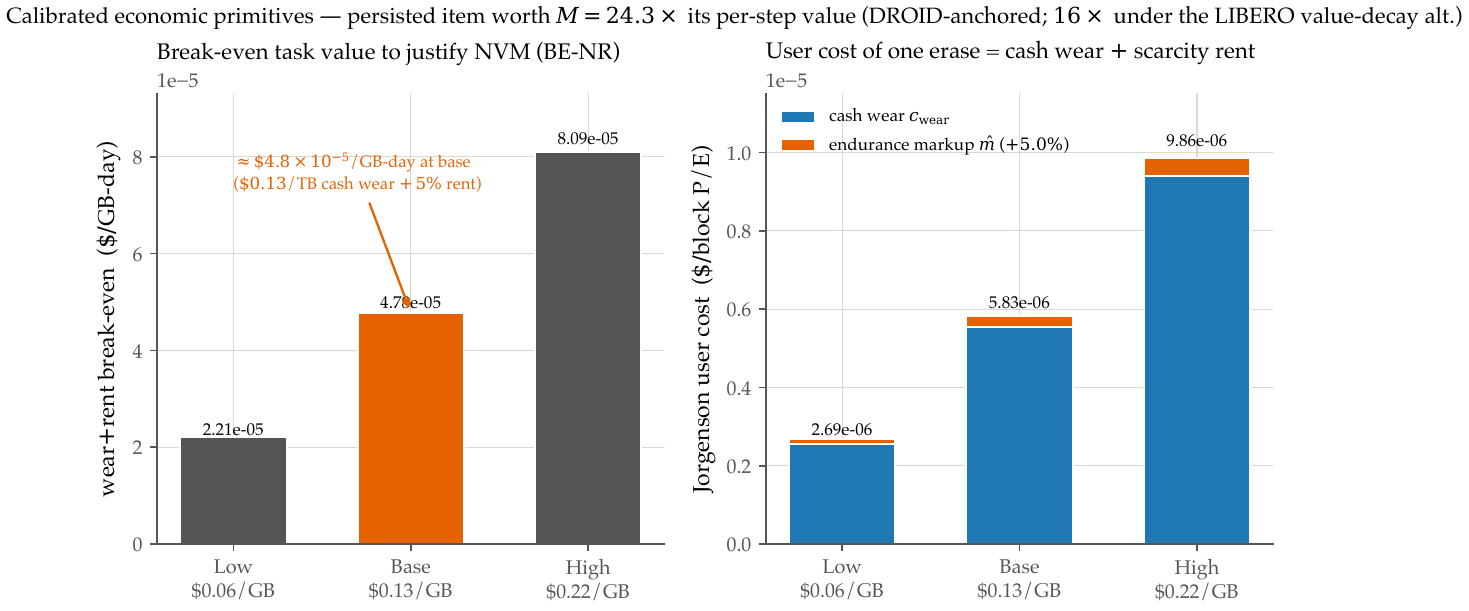}
  \caption{\textbf{Calibrated economic primitives.} Left: the wear$+$rent
  break-even ($\$/$GB-day) at Low/Base/High NAND ($\approx\$4.8\times10^{-5}$ at
  base). Right: the Jorgenson user cost of one erase, decomposed into datasheet
  cash wear plus the $+5.0\%$ endurance markup ($\etamark$). The persisted
  item is worth $M=24.3\times$ its per-step value (half-life $21.7$ steps).
}
  \label{fig:calib}
\end{figure}

\section{Experimental Design}\label{sec:design}
The design is pre-specified as a five-phase pipeline with published kill
criteria (\cref{fig:pipeline}); the frozen plan ships in the released artifact
(\cref{app:repro}). One scope note on what ``pre-specified'' means here: the
plan is a frozen, version-controlled document committed before the corresponding
runs, with per-phase gates and kill criteria fixed in advance, but it is
\emph{internally} timestamped rather than lodged with an external registry
(e.g.\ OSF), a weaker guarantee that we state plainly. Headline
backbone: SmolVLA-0.5B~\citep{shukor2025smolvla}; OpenVLA-7B
\citep{kim2024openvla} is a single \emph{scale-stress} arm (the pre-specified
cross-backbone confirmation requires Spearman $\ge 0.6$, which it does not meet;
see \cref{sec:results}). Datasets: LIBERO-LONG~\citep{liu2023libero}, DROID,
a $100$-ep Phase-1 sample enlarged for a high-power cross-dataset re-test,
analyzed on the pre-specified $1{,}200$-new-only subset ($n=9{,}598$ frames,
$1{,}200$ physical-scene clusters; \texttt{lerobot/droid\_1.0.1})~\citep{khazatsky2024droid}. Primary metric:
\emph{task-success-per-joule-per-erase}.

\begin{figure}[t]
\centering
\resizebox{\linewidth}{!}{%
\begin{tikzpicture}[font=\scriptsize, >={Stealth[length=2mm]},
  ph/.style={draw, rounded corners=1.5pt, align=center, inner sep=4pt,
             line width=0.7pt, text width=2.3cm, minimum height=1.6cm},
  kg/.style={align=center, text=rentred, font=\tiny, text width=2.5cm,
             inner sep=1pt},
  flow/.style={->, line width=0.8pt, draw=inkgray}]
\node[ph, fill=ramblue!8, draw=ramblue!75!black] (p0) at (0,0)
  {\textbf{Phase 0}\\value--write gate\\$\hat\chi$, $\rho_s$ on real robot logs ($\le\$25$)};
\node[ph, fill=cloudteal!8, draw=cloudteal!75!black] (p1) at (3.05,0)
  {\textbf{Phase 1}\\value labeling\\SmolVLA-0.5B counterfactuals $+$ value model};
\node[ph, fill=nvmorange!10, draw=nvmorange!80!black] (p2) at (6.10,0)
  {\textbf{Phase 2}\\placement controller\\set transformer, $3.15$M\\BC warm-start $+$ PPO};
\node[ph, fill=ramblue!8, draw=ramblue!75!black] (p3) at (9.15,0)
  {\textbf{Phase 3}\\offline eval battery\\cost-matched ladder\\McNemar $+$ Holm};
\node[ph, fill=rentred!7, draw=rentred!70!black] (w3) at (12.20,0)
  {\textbf{Closed loop /}\\\textbf{VLA-in-the-loop}\\binding-regime\\rollouts $+$ causal gate};
\draw[flow] (p0) -- (p1);\draw[flow] (p1) -- (p2);
\draw[flow] (p2) -- (p3);\draw[flow] (p3) -- (w3);
\node[kg, below=3pt of p0] {kill: $\rho_s\in[-0.1,+0.1]$ $\wedge$ recompute $<10\%$ (passed)};
\node[kg, below=3pt of p1] {floor: cross-backbone $\rho_s\ge0.6$ (not met: 7B uninterpretable)};
\node[kg, below=3pt of p2] {cap: $\le 50$M params\\$\ge 3$ seeds (met)};
\node[kg, below=3pt of p3] {H1--H6 kill criteria;\\H2 negative};
\node[kg, below=3pt of w3] {causal gate $\ge+8$\,pp, measured $+0.0$\,pp (abort)};
\end{tikzpicture}}
\caption{\textbf{Experiment-architecture pipeline.} Five pre-specified stages,
each with a published kill criterion (red): the \$25 Phase-0 gate decides whether
the non-monotone branch is admissible before any training spend; Phase 1 labels
item value via action counterfactuals; Phase 2 trains the $3.15$M-parameter
placement controller; Phase 3 runs the cost-matched McNemar/Holm battery; the
final stage closes the loop in the binding endurance regime, and the
VLA-in-the-loop causal gate aborted at $+0.0$\,pp.}
\label{fig:pipeline}
\end{figure}

\subsection{Phase 0: value--write go/no-go gate ($\le\$25$, pre-specified)}
Measures whether write-intensity $w_i$ is associated with item value $v_i$
through a channel that is \emph{not} the shared retrieval-frequency process. The
value proxy $\hat v_i$ is counterfactual task-outcome attribution on an
\emph{independent} sample via batched SmolVLA masked-vs-unmasked passes, with a
forward-value regression confirmation; $w_i$ is measured on the complement
sample. We pre-specify \emph{both} estimators: Spearman $\rho_s(w,\hat v)$
(rank screen) and the local-slope $\hat\chi=\tfrac{d}{dv}\E[w\mid v]$
(kernel/local-linear, percentile-bootstrap 95\% CI); $\hat\chi$ is
decision-relevant. \textbf{Kill criterion:} pivot to the monotone-index design
iff (proxy spot-check passed) AND $\hat\chi\le 0$ with CI excluding $\chi>0$ AND
$\rho_s\in[-0.1,+0.1]$ AND cheap-recompute fraction $<10\%$, on \emph{both}
LIBERO and DROID. The gate is deliberately \emph{conjunctive across datasets};
\cref{sec:results} reports that DROID alone later met these thresholds while
LIBERO did not, so the AND-gate correctly did not fire and the non-monotone
branch is retained but \emph{scoped} to the recurrent regime.

\subsection{Phase 1: value labeling}
\begin{sloppypar}
P1a: batched counterfactual labeling of a stratified item subset with frozen
SmolVLA-0.5B (importance-weighted to the population; exact \#forward-passes
pre-specified). P1b: a $\le 50$M supervised value model
$(\hat v_i,\hat\lambda_i,\hat\delta_i)$ generalizing sparse labels. P1c:
OpenVLA-7B single scale-stress arm (pre-specified cross-backbone Spearman
$\ge 0.6$ target), \emph{not} a per-step counterfactual sweep.
\end{sloppypar}

\subsection{Phase 2: placement controller ($\le 50$M; BC warm-start + PPO; $\ge 3$ seeds)}
A small transformer over the item set $+$ a system-state token. Reward
$=$ success-proxy $-\lambda\cdot$energy $-\nu\cdot$erase $-\zeta_{\mathrm{cloud}}
\cdot$cloud-\$ (the cloud weight is $\zeta_{\mathrm{cloud}}$; $\rho$ is
reserved for the Phase-0 Spearman, $\chi$ for the value--write slope; no symbol
does double duty: in particular the two value crossings carry distinct symbols
throughout: $\vbe$ denotes the price-statics \emph{break-even} on support
($=0.91$, \cref{sec:results-price}) and $\vdc$ the theory interior
\emph{down-crossing} (beyond support at the measured $\hat\chi$, \cref{prop:nonmono}); the
bare word ``down-crossing'' always refers to $\vdc$). Hindsight-oracle warm start, then PPO over stochastic
network/energy/endurance regimes; split by scene (70/15/15).

\subsection{Phase 3: baseline ladder, $\delta$ fitting, sensitivity}
\Cref{tab:baselines} is the cost-matched baseline ladder. $\delta$ is fitted
from retrieval-recurrence decay (reported fitted-vs-assumed). A 4-axis
stitched-episode sensitivity battery (stitch-boundary incl.\ adversarial,
depreciation, stream-length, scope-sensitivity) is the headline robustness claim.

\begin{table}[t]
\centering
\caption{Cost-matched baseline ladder. Each baseline is constrained to the
identical energy $+$ erase $+$ \$ budget as ours.}
\label{tab:baselines}
\small
\setlength{\tabcolsep}{4pt}
\begin{tabular}{ll p{0.32\linewidth}}
\toprule
Tag & Baseline & Role\\
\midrule
all-cloud / all-RAM / all-NVM & extremes & bound frontier; all-NVM hits the cliff\\
LRU / LFU / size-based & classic caching & monotone-in-score floor\\
Flashield-style~\citep{flashield2019nsdi} & ML admission, cost-matched & pre-empt rebuttal\\
CacheSack-style~\citep{cachesack2023tos} & knapsack admission, cost-matched & A3 rebuttal\\
Chinchali offload~\citep{chinchali2019offloading} & learned offload, cost-matched & ``just offloading'' rebuttal\\
surprise-gated~\citep{gorlo2026surprise} & embodied-memory heuristic & SOTA heuristic arm\\
AURA single-tier~\citep{chen2026aura} & write-if-gated, no tier choice & \textbf{must strictly dominate}\\
random / hindsight oracle & floor / ceiling & frontier bounds\\
\bottomrule
\end{tabular}
\end{table}

\subsection{Pre-specified hypotheses}\label{sec:hypotheses}
Eval $N\ge 200$ held-out episodes/seed ($\ge 600$ paired at the 3-seed floor;
5 seeds for headline H1/H3/H4). Paired McNemar vs.\ each cost-matched baseline,
Holm--Bonferroni over the family \{H1, H1b, H2, H3, H4, H4b, H5, H6\}. Three of
these (H4, H4b, H5) require a swept-box / $C_R$-sweep run that did not land, so the
\emph{realized} corrected family is \{H1, H1b, H2, H3, H6\}; we flag the unrun
members rather than quietly drop them. Every value--write test pre-specifies both
$\rho_s$ (screen) and $\hat\chi$ (decision-relevant). Statistics methodology
follows~\citet{chen2026aegis}.
\Cref{tab:hypotheses} lists each falsifiable statement and its kill criterion.

\begin{table}[t]
\centering
\caption{Pre-specified hypotheses, tests, and kill criteria.}
\label{tab:hypotheses}
\footnotesize
\begin{tabular}{>{\RaggedRight\arraybackslash}p{0.05\linewidth}>{\RaggedRight\arraybackslash}p{0.40\linewidth}>{\RaggedRight\arraybackslash}p{0.22\linewidth}>{\RaggedRight\arraybackslash}p{0.23\linewidth}}
\toprule
ID & Falsifiable statement & Test / metric & Kill criterion\\
\midrule
H1 & Optimal local-persist is non-monotone in $v$ when $\eta>0$ (negative high-$v$ coeff). & persist regression; sign $+$ $p$ (Holm). & no negative high-$v$ coeff at $p<0.05$ in any $\eta>0$ regime.\\
H1b & Persist set monotone-shrinking as endurance tightens (Prop.~\ref{prop:monotone}). & $\partial(\text{persist frac})/\partial(1/\Eend)$; bootstrap CI. & not decreasing in $1/\Eend$ at $p<0.05$.\\
H2 & Controller beats every cost-matched baseline on success-per-joule-per-erase. & McNemar, Holm. & fails strongest cost-matched baseline at $p<0.05$.\\
H3 & Controller strictly dominates the AURA single-tier arm. & McNemar per-joule-per-erase. & no strict dominance at $p<0.05$.\\
H4 & Non-monotonicity occupies $\ge 10\%$ of the datasheet-plausible box. & fraction of swept box with down-crossing. & $<10\%$ of the box.\\
H4b & $\vdc$ exists $\forall\mu_R$; location shifts with $\eta$ (Prop.~\ref{prop:ramslack}). & down-crossing per $\mu_R$ cell; $\partial\vdc/\partial\eta<0$. & $\vdc$ absent in a cell OR slope not negative.\\
H5 & Boundary sensitive to $\Eend$ (vanishes as $\Eend\to\infty$). & $\partial\Theta/\partial\Eend<0$. & boundary insensitive to $\Eend$.\\
H6 & Marginal task-value of bandwidth increases in endurance tightness. & $\partial^2(\text{value})/\partial b\,\partial(1/\Eend)>0$. & flat in tightness (non-fatal).\\
\bottomrule
\end{tabular}
\end{table}

\section{Results}\label{sec:results}
All $\chi$ are computed with the verbatim Phase-0 local-linear-slope estimator
under \emph{physical-scene clustering} (resampling physical \texttt{scene\_id},
not seed-reshuffle pseudo-clusters) with $1{,}000$-resample $95\%$ scene-clustered
bootstrap CIs; the canonical $\chi$ table shipped in the artifact
(\texttt{chi\_canonical\_table.json}, \cref{app:repro}) is the paper's source of
truth. Controller and price-statics results are reported exactly as the data say
them; a negative or null result is reported as a finding, not a failure.

\subsection{Phase-0 gate and the \texorpdfstring{backbone$\times$regime $\chi$}{backbone-regime chi} matrix}\label{sec:results-chi}
\Cref{tab:chimatrix} is the canonical value$\to$write matrix. On LIBERO-Long with
the SmolVLA-0.5B backbone, $\hat\chi=+1.016\times10^{-3}$ (CI
$[+3.81\times10^{-4},+1.65\times10^{-3}]$, $\rho_s=+0.10$, cheap-recompute fraction
$<10\%$): it excludes zero positive and survives Holm correction across the
three-arm $\chi$ family (rank 2, $\alpha=0.025$, reject). The cheap forward-value
proxy tracks the full SmolVLA counterfactual at held-out $r_{\mathrm{cf}}=0.92$
(pre-specified floor $0.40$)---an \emph{internal} proxy-consistency check, not a
validation of $\hat v$ against realized task success, which the $+0.0$pp causal gate
(\cref{sec:honesty}) leaves open; every $\chi$ here is therefore a coupling to the
value \emph{proxy}. The
rejection does not depend on the family's composition---it holds as a single test,
in a two-arm pre-specified family, and at rank 2 of the three-arm family---so the
inclusion of the post-hoc DROID arm, whose unadjusted significance we read only as
exploratory, does not carry the headline.
The effect is \emph{real but small}: the $\chi$-implied $w$-swing across the
$5$th--$95$th value percentile is $4.92\times10^{-4}$, $\approx5.6\%$ of $w$'s
dynamic range, so it supports the \emph{conditional} branch of
\cref{prop:nonmono}, not a universal monotone law. The effect is \emph{LIBERO-Long-specific}: a second,
shorter-horizon LIBERO suite (goal/object/spatial) is null ($\hat\chi=-1.58\times10^{-3}$,
CI $[-3.84\times10^{-3},+6.5\times10^{-4}]$ straddles zero, $n{=}1{,}520$ items /
$190$ clusters, $\rho_s\approx0$; second-suite row of the canonical $\chi$
table), so the coupling tracks long-horizon
recurrence, not LIBERO-as-a-dataset. The enlarged DROID arm is significantly
negative ($\hat\chi=-8.95\times10^{-3}$, CI $[-1.61\times10^{-2},-4.09\times10^{-3}]$,
Holm rank 1, reject) but \emph{post-hoc / exploratory}: the enlargement was
launched after a negative, underpowered $100$-scene pilot (a forking path), so we
report it on the pre-specified $1{,}200$-new-only subset (excluding the motivating
episodes) as a suggestive regime-difference signal, not a confirmation.

\paragraph{The OpenVLA-7B arm is uninterpretable, not a disconfirmation.}
The pre-specified confirmation criterion is a cross-backbone Spearman
$\ge0.6$ between the two backbones' per-item value rankings. Measured on the
identical $3{,}032$ LIBERO frames it is only $\rho_s=0.05$ (CI $[0.016,0.086]$),
far below the floor: OpenVLA-7B and SmolVLA place items on \emph{near-orthogonal
value axes}, so a $\chi$ sign difference between them is two incomparable
measurements, not a contradiction of the headline mechanism. Independently, under
physical-scene clustering OpenVLA's own $\chi$ straddles zero ($-2.44\times10^{-4}$,
$p=0.18$). We therefore withdraw any cross-backbone ``sign reversal'' claim and
report the arm as null/uninterpretable (\cref{fig:chimatrix}).

\paragraph{Gate decision.} The pre-specified kill is conjunctive across datasets:
DROID independently met the kill thresholds ($\hat\chi\le0$ with CI excluding
$\chi>0$, $\rho_s=-0.077\in[-0.1,+0.1]$), but LIBERO-Long did not, so the AND-gate
did \emph{not} fire. We \textbf{proceed} on the non-monotone branch, scoped to the
recurrent regime per the measured boundary.

\begin{table}[t]
\centering
\caption{Canonical value$\to$write matrix (physical-scene clustering,
$1{,}000$-resample bootstrap, Holm across the three-arm $\chi$ family). Source:
\texttt{chi\_canonical\_table.json}.}
\label{tab:chimatrix}
\small
\setlength{\tabcolsep}{4pt}
\begin{tabular}{lllrrl}
\toprule
Backbone & Regime / suite & $\hat\chi$ (raw) & $\beta_{\mathrm{std}}$ & $\rho_s$ & Verdict\\
\midrule
SmolVLA-0.5B & LIBERO-Long (recurrent) & $+1.016\!\times\!10^{-3}$ & $+0.118$ & $+0.10$ & CI$>0$; Holm-reject\\
SmolVLA-0.5B & LIBERO goal/obj/spatial & $-1.58\!\times\!10^{-3}$ & $-0.064$ & $\approx0$ & straddles 0; null\\
SmolVLA-0.5B & DROID (enlarged, post-hoc) & $-8.95\!\times\!10^{-3}$ & $-0.084$ & $-0.077$ & CI$<0$; Holm-reject\\
OpenVLA-7B & LIBERO-Long (scale-stress) & $-2.44\!\times\!10^{-4}$ & $-0.094$ & $-0.006$ & straddles 0; uninterp.\\
\bottomrule
\end{tabular}
\end{table}

\begin{figure}[t]
  \centering
  \includegraphics[width=0.92\linewidth]{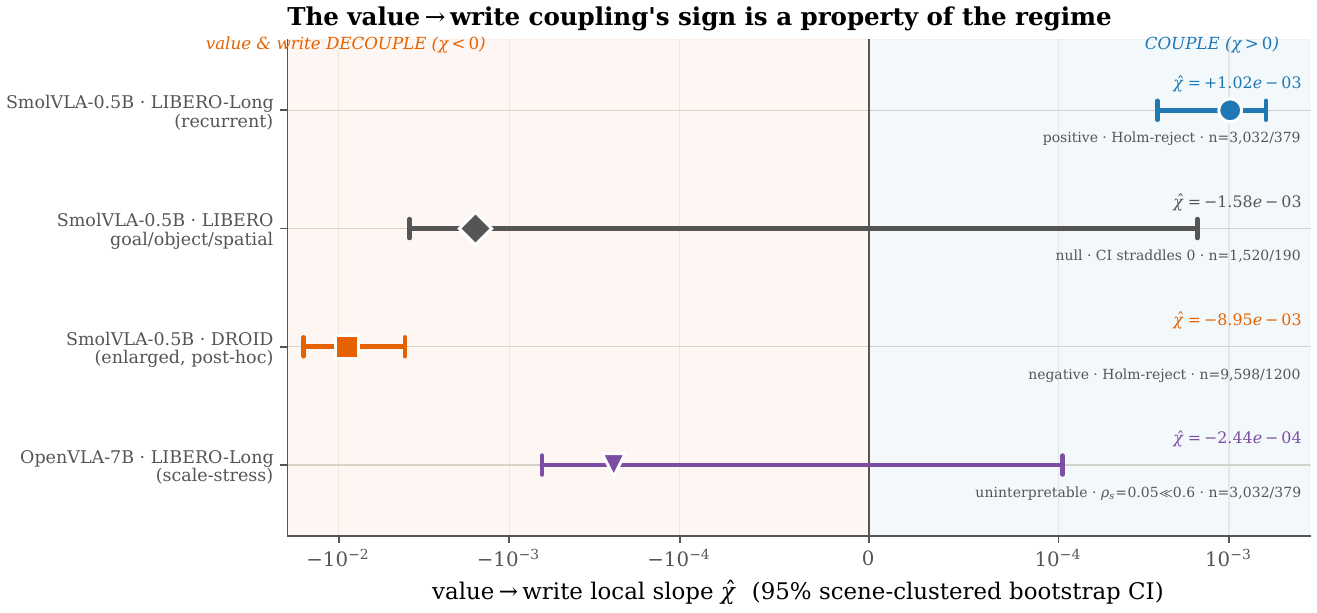}
  \caption{\textbf{The value$\to$write coupling's sign is regime-dependent.}
  Canonical $\hat\chi$ with $95\%$ scene-clustered CIs: positive and CI-excluding-zero
  on recurrent LIBERO-Long (SmolVLA-0.5B), significantly negative on non-recurrent
  DROID (post-hoc), and CI-straddling on the OpenVLA-7B scale-stress arm, whose
  cross-backbone agreement ($\rho_s=0.05\ll0.6$) makes its sign uninterpretable.
}
  \label{fig:chimatrix}
\end{figure}

\begin{figure}[t]
  \centering
  \includegraphics[width=0.92\linewidth]{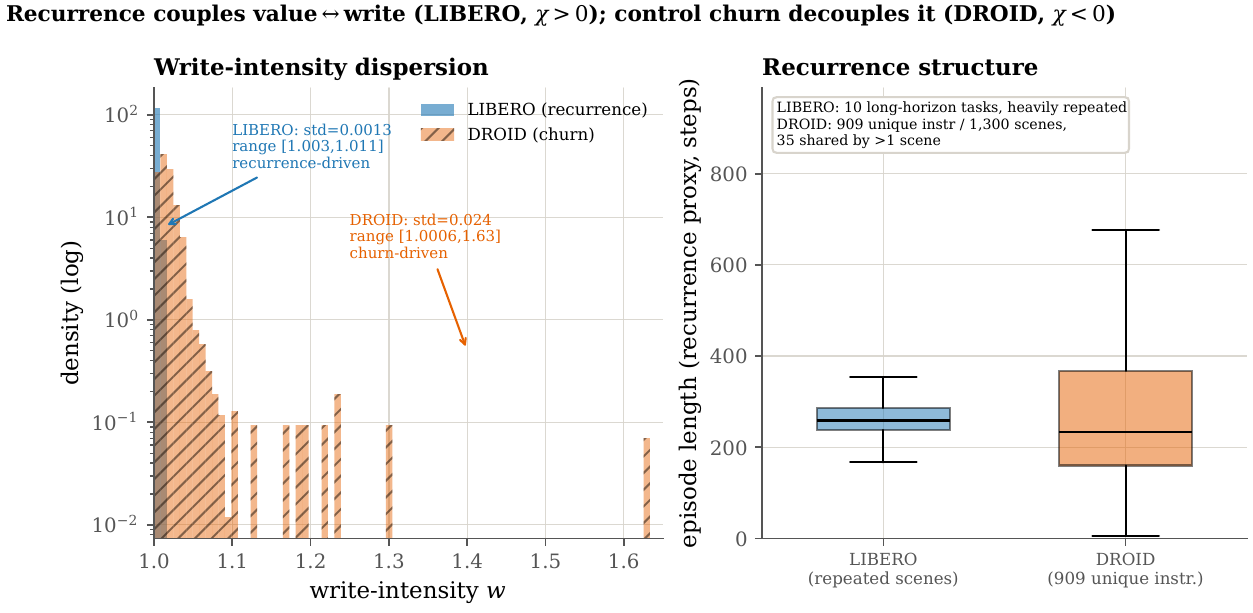}
  \caption{\textbf{Recurrence-driven vs.\ churn-driven write-intensity: the
  mechanism behind the sign.} Left: $w$ is tight on LIBERO-Long
  ($\mathrm{std}\,0.0013$, dominated by re-observation of recurrent scenes) and
  wide on DROID ($\mathrm{std}\,0.024$, range $1.0006$--$1.63$, driven by
  teleoperation churn). Right: episode-length dispersion, a recurrence proxy
  ($268\pm57$ steps LIBERO vs.\ $303\pm230$ DROID; $909$ unique instructions over
  the $1{,}300$-scene full DROID pool. The $\chi$ estimate is reported on the
  pre-specified $1{,}200$-new-only analyzed subset: $1{,}300$ is the full pool,
  $1{,}200$ the analyzed subset; the difference is the $100$-scene motivating pilot
  excluded to avoid the forking path).}
  \label{fig:regime}
\end{figure}

\subsection{Recurrence dose-response (mechanism test)}\label{sec:results-dose}
To test the recurrence mechanism directly, we interpolate episode mixes blending
the non-recurrent DROID regime ($\chi<0$) with the recurrent LIBERO-Long regime
($\chi>0$), with dose $=$ fraction of recurrent episodes (\cref{fig:dose}). The
pre-specified design is a twelve-level, three-seed sweep across the sign-flip band.
At its first run the DROID frame loader drew from a $100$-episode sample, so the
realized recurrence range was compressed and the $\hat\chi$-trend was underpowered
(Spearman $+0.35$, $p=0.12$); a Fisher combination with an earlier seven-level
sweep reached $p=0.039$, but that combine assumed an independence the two sweeps
only partially have and sat one rounding step from failing. Rather than lean on it,
we re-ran the \emph{same} pre-specified twelve-level grid at full power, pointing
the loader at the full \texttt{lerobot/droid\_1.0.1} ($600$ distinct DROID scenes
vs.\ $100$): the only change is the dataset, not the design or the estimator.

The dose-response \textbf{replicates decisively}. $\hat\chi$ rises---rank-monotone,
with minor level-to-level wobble but no trend reversal---from $-5.2\times10^{-3}$ at
pure churn (dose $0$) to $+1.67\times10^{-2}$ at the recurrent end (dose $0.5$).
Across the twelve level means the trend is Spearman
$\rho=0.94$ (level-clustered permutation $p<10^{-4}$); the scale-free
rank-association trend is $\rho_s$-vs-dose $=0.97$ ($p<10^{-4}$); and every other
test agrees (Kendall $\tau=0.82$; OLS $R^2=0.83$; per-seed level-block $p<10^{-4}$;
low-vs-high-dose sign flip, split at $0.25$, Mann--Whitney $p=10^{-5}$). The
properly-powered replication thus supersedes both the underpowered single sweeps
and the fragile Fisher combine: the recurrence mechanism is confirmed, not merely
suggested. The full-power run cost \$1.12 on one L40S.

Two bounds remain. First, this is a dose-response in the value
\emph{proxy}-to-write coupling $\hat\chi$; whether the proxy itself tracks realized
task success is the separate, unvalidated question of \cref{sec:honesty}. Second,
the design and grid were pre-specified, but this full-DROID re-run was executed
after review (the natural fix to the disclosed loader limitation), so we report it
as a pre-specified-design replication at proper power, not as the original frozen
primary.

\begin{figure}[t]
  \centering
  \includegraphics[width=0.92\linewidth]{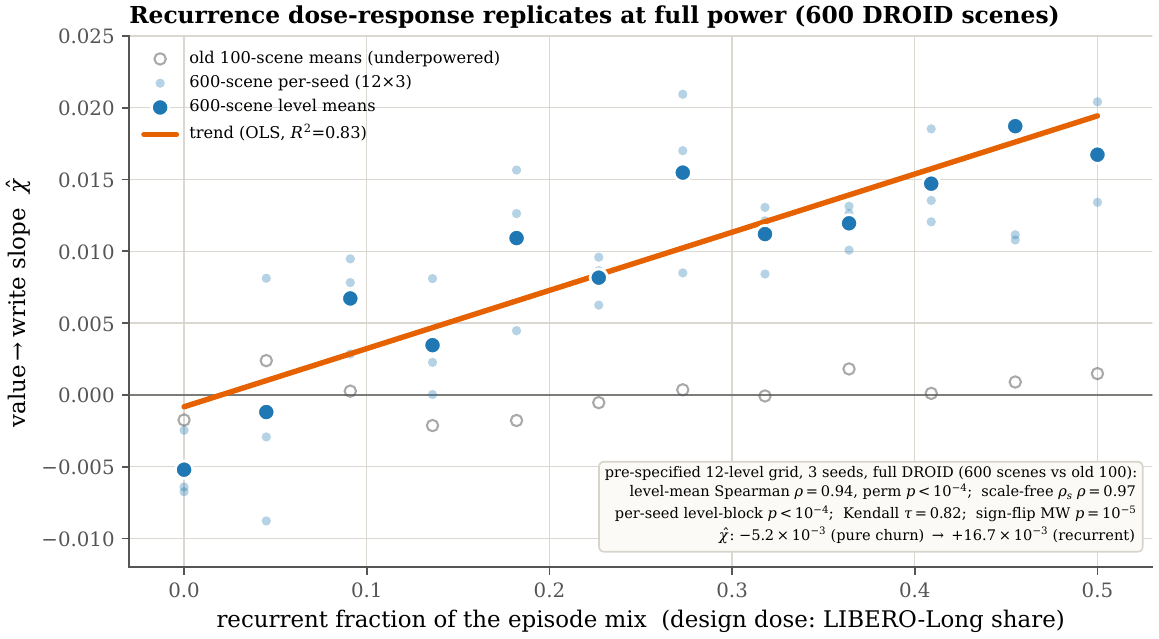}
  \caption{\textbf{Recurrence dose-response replicates at full power.} $\hat\chi$
  vs.\ the recurrent (LIBERO-Long) fraction of the episode mix, on the pre-specified
  twelve-level grid (three seeds) re-run with the full \texttt{droid\_1.0.1}
  ($600$ DROID scenes; light dots are seeds, filled circles level means, orange the
  OLS trend, $R^2=0.83$). The loader-capped $100$-scene means (hollow grey) cluster
  near zero; at full power $\hat\chi$ rises (rank-monotone) from $-5.2\times10^{-3}$
  (pure churn) to $+1.67\times10^{-2}$ (recurrent), Spearman $\rho=0.94$,
  permutation $p<10^{-4}$ across all trend tests. This supersedes the underpowered
  sweeps and the retired Fisher combine.}
  \label{fig:dose}
\end{figure}

\subsection{When the budget binds: commodity edge storage, not premium TLC}\label{sec:results-binding}
The dormancy we report---$\eta=0$ at datasheet prices---is a property of the
\emph{premium} storage we pinned to ($3{,}000$-P/E TLC on a $128$-GB module,
TBW$=128$~TB), not of the embodied-memory problem. That configuration gives a
device write-lifetime of $\approx5.2$ years at the measured fleet write demand
($24.4$~TB/robot/yr, \cref{sec:cscy-impl})---right at the edge of a $3$--$5$-year
deployment, so the budget only just fails to bind. But cheap edge robots run
\emph{denser, cheaper} NAND---commodity QLC and eMMC, whose endurance is
$\sim\!1{,}000$ P/E (a few hundred for the cheapest parts) rather than
$3{,}000$~\citep{purestorage_qlc,newegg2026ssdendurance}. There the same write
demand exhausts the stock within a deployment: a $128$-GB QLC part lasts
$\approx1.8$~years, a $64$-GB QLC part $\approx0.7$, a $32$-GB eMMC part
$\approx0.3$ (\cref{fig:binding}). \textbf{On the storage commodity edge robots
actually use, the endurance budget binds at datasheet prices ($\eta>0$) and the
wear-pricing layer is live.} The dormancy is a knife-edge artifact of the
premium-TLC pin, flipping to firmly-binding under exactly the cheaper-NAND regime
the regime map (\cref{fig:regimemap}) anticipates.

\paragraph{What binding buys: cost and lifetime, not task value.} We ran the
placement ladder in two binding regimes, and both \emph{tie} endurance-blind
routing on task value. (i) Under an artificial S2 cap (RAM-scarce, cloud expensive,
endurance capped at $0.4\times$ write demand; $5$ seeds $\times$ $5$ cells, binary
success proxy) the controller strictly beats the naive all-NVM AURA strategy
(McNemar $b=200$, $c=0$) but only ties trivial cloud-routing (H2 $p=1.0$): routing
to cloud avoids the endurance wall for free, so the binary proxy saturates. (ii)
Under the realistic commodity-QLC binding regime with a graded
net-realized-value metric, the wear-augmented index---clairvoyant optimum and
deployable $\eta$-routing alike---again ties endurance-blind routing (LRU,
size-based; advantage $0.00$, $95\%$ CI $[0,0]$ over scene-clustered resamples).
The reason is structural: realized value is \emph{tier-invariant}---RAM, NVM, and
cloud all serve the item---so the endurance rent reshapes \emph{costs} (erases,
energy, dollars, device lifetime), not task value, and a simple rule that routes
off flash captures the same value. A \emph{task-value} payoff would need both a
regime where flash is forced and scarce and a value signal validated against
realized task success (\cref{sec:honesty}); that is future work.

\begin{figure}[t]
  \centering
  \includegraphics[width=0.92\linewidth]{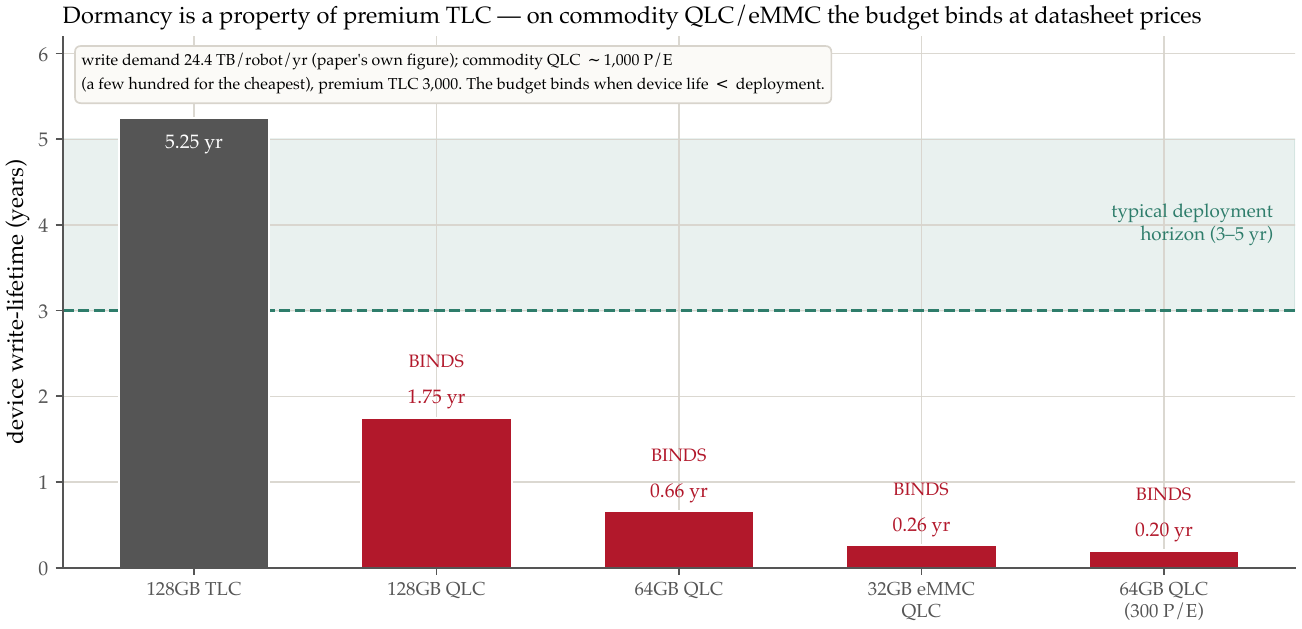}
  \caption{\textbf{The endurance budget binds on commodity storage.} Device
  write-lifetime at the measured fleet write demand ($24.4$~TB/robot/yr) across
  storage classes: premium $128$-GB TLC lasts $\approx5.2$~years (dormant, at the
  edge of deployment), but commodity QLC/eMMC ($\sim\!1{,}000$ P/E
  \citep{purestorage_qlc,newegg2026ssdendurance}) wears out in $0.2$--$1.8$~years, well inside a
  $3$--$5$-year deployment---so the budget binds at datasheet prices and the
  wear-pricing lever is live.}
  \label{fig:binding}
\end{figure}

\paragraph{Why the index ties: write-intensity has no dispersion to exploit.} The
tie is not an artifact of treating the cloud as a free escape hatch. Re-pricing the
cloud tier at the model's slow value $\Vslow$ (\cref{eq:PiRN}) and sweeping
connectivity from connected to fully disconnected leaves the wear-aware advantage at
$0.00$ in every regime. The cause is the value--write \emph{joint} distribution: on
LIBERO, write-intensity is nearly constant ($\mathrm{CV}(w)=0.13\%$,
$w\in[1.003,1.011]$) while value varies roughly two orders of magnitude more
($\mathrm{CV}(v)\approx24\%$ on the same items). The wear-augmented index ranks placement by surplus
\emph{per erase}; dividing by an almost-constant $w$ leaves the ranking unchanged,
so the index \emph{collapses to value-ranking} and the two policies coincide.
\Cref{fig:dispersion} makes the boundary quantitative: on a synthetic control with
$\chi>0$ held fixed and $\mathrm{CV}(w)$ swept, the advantage is flat at zero until
$\mathrm{CV}(w)\gtrsim 0.2$ and reaches $\approx 4\%$ at $\mathrm{CV}(w)=0.5$, while
LIBERO sits two orders of magnitude below, at the floor. Wear-aware placement thus
pays off precisely when high-value memories are \emph{disproportionately
rewritten}---a property of the workload, not the policy---which LIBERO-class
manipulation does not exhibit.

\begin{figure}[t]
  \centering
  \includegraphics[width=0.9\linewidth]{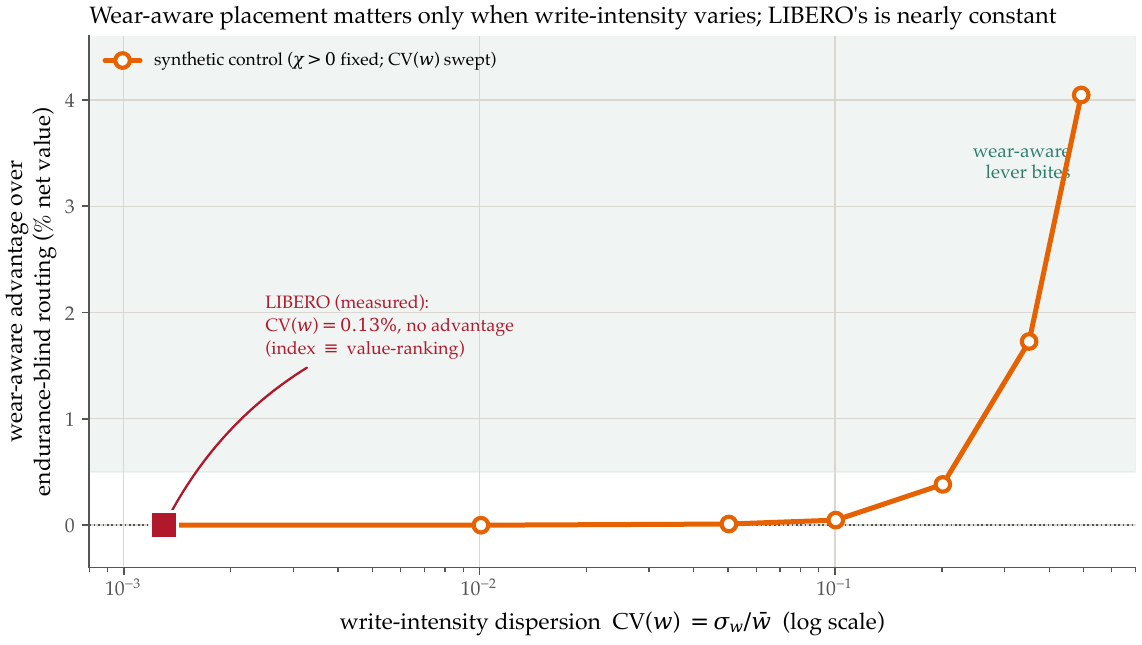}
  \caption{\textbf{When wear-aware placement beats endurance-blind routing.} The
  advantage in net realized (modeled) value is flat at zero until write-intensity
  dispersion $\mathrm{CV}(w)$ is substantial (synthetic control, $\chi>0$ fixed,
  $\mathrm{CV}(w)$ swept; cloud priced at the model's $\Vslow$ in the binding
  commodity-QLC regime). LIBERO's measured $\mathrm{CV}(w)=0.13\%$ sits at the
  floor, where the wear-augmented index collapses to value-ranking. Value is the
  proxy $\hat v$, not realized task success.}
  \label{fig:dispersion}
\end{figure}

\paragraph{The recurrence--dispersion tension: why the win regime is empty.} The
dispersion the lever needs ($\mathrm{CV}(w)\gtrsim 20\%$) is not merely unmet on
LIBERO---it is \emph{structurally} out of reach, because the two ingredients a
wear-aware win requires (a positive coupling $\chi>0$ \emph{and} high write-dispersion)
are anti-correlated across every way real robots generate writes (\cref{fig:tension}).
\textbf{Recurrence} (manipulation) makes $\chi$ positive---re-observing valuable
scenes couples value with writes---but re-observing the \emph{same} scenes homogenizes
the stream ($\mathrm{CV}(w)\le0.4\%$ on recurrent LIBERO and on a real SO-101 and a sim
ALOHA arm). \textbf{Churn} (teleoperation) spreads write-intensity but decouples it
from value, driving $\chi<0$. And \textbf{navigation}---the escape hatch we predicted,
where landmarks recur \emph{unequally}---does have the highest dispersion of any
workload ($\mathrm{CV}(w)$ of $7$--$10\%$ on the Berkeley GNM datasets), but its
frequently-traversed places are low-value transit while distinctive landmarks are seen
rarely, so value and writes \emph{anti}-correlate ($\chi$: $\rho_s=-0.13,-0.22$; proxy
validity $\rho\ge0.92$). We then surveyed write-dispersion across $\approx20$
Open-X / LeRobot workloads and ran the $\chi$ pipeline on the highest-dispersion
ones. Across thirteen workloads with measured $\chi$---spanning the ecosystem,
multiple embodiments, and all three write-generating mechanisms, plus the full-power
recurrence dose-response (\cref{sec:results-dose}; $\chi$ rises monotonically with
recurrence, $\rho=0.94$, which lifts $\chi$ exactly as it homogenizes the
writes)---the win quadrant ($\chi>0$ \emph{and} $\mathrm{CV}(w)>20\%$) is empty, and
empty by \emph{mechanism}, not by sampling. The single closest approach is a
bimanual xArm dataset ($\rho_s=+0.31$, $\hat\chi=+0.34$ at $\mathrm{CV}(w)=16\%$,
proxy $0.88$)---the lone positive coupling at high dispersion---but it is
underpowered ($n=70$ episodes, $\chi$ CI $[-0.11,+0.78]$ straddling zero) and still
short of the $20\%$ threshold, so we report it as a \emph{suggestive frontier}, not a
counterexample: coordination-rich manipulation is the regime a future positive
result should target. (All external embodiments use LeRobot v3.0, video-decoded
through the \emph{same} image-counterfactual $\chi$ pipeline with action/state
z-scored per dataset so $\mathrm{CV}(w)$ is comparable; held-out proxy validity
$\rho\ge0.76$ throughout.) A wear-aware placement win would require a workload where
high-value memories are frequently \emph{and} unequally rewritten \emph{with positive
coupling}; no natural regime we measured supplies all three, and we state this as the
precise, falsifiable boundary rather than engineer a workload to cross it.

\begin{figure}[t]
  \centering
  \includegraphics[width=0.92\linewidth]{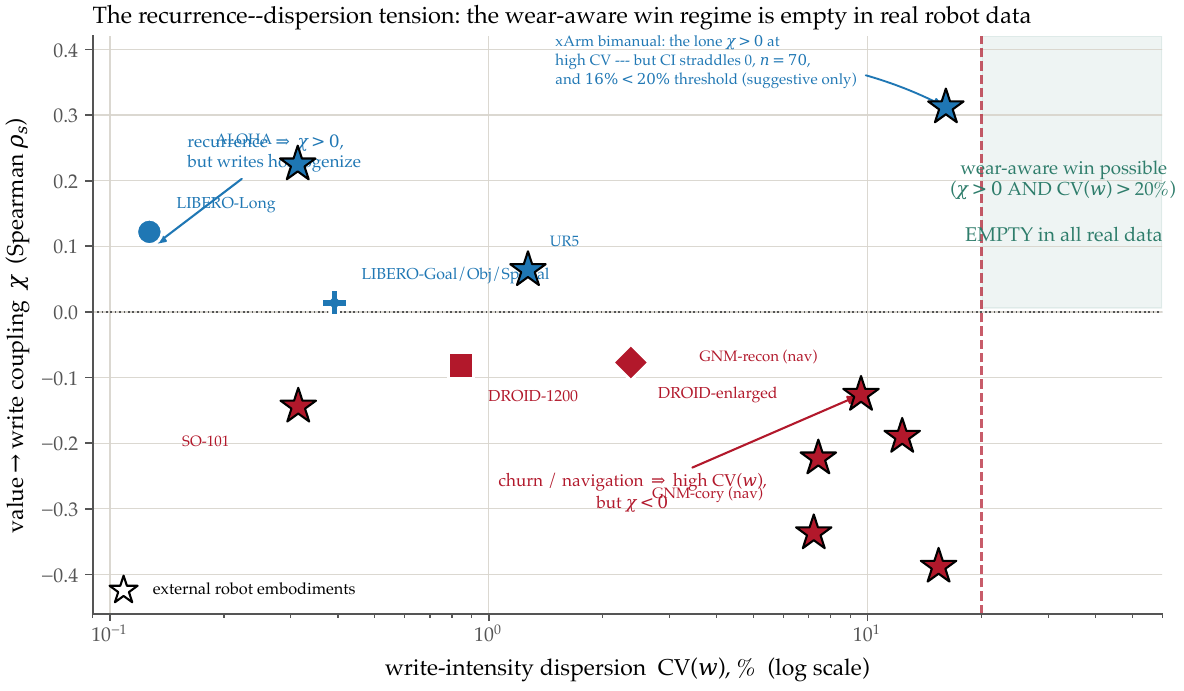}
  \caption{\textbf{The recurrence--dispersion tension.} A wear-aware win needs both
  $\chi>0$ and high write-dispersion $\mathrm{CV}(w)$, but they are anti-correlated
  across all three write-generating mechanisms: recurrence (manipulation) gives $\chi>0$
  at low dispersion; churn (teleoperation) and navigation give high dispersion but
  $\chi<0$. Four in-house workloads (round/plus/square/diamond) and nine external
  robot workloads (stars) from an $\approx20$-workload Open-X / LeRobot survey trace
  an anti-correlated band; the win quadrant ($\chi>0$, $\mathrm{CV}(w)>20\%$, threshold
  from \cref{fig:dispersion}) is empty. The lone positive coupling at high dispersion
  is a bimanual xArm dataset ($\rho_s{=}+0.31$ at $\mathrm{CV}(w){=}16\%$), flagged
  \emph{suggestive} because its $\chi$ CI straddles zero ($n{=}70$) and it remains
  below the $20\%$ threshold---a frontier for future work, not a counterexample. The
  full-power recurrence dose-response is reported separately (\cref{sec:results-dose}).}
  \label{fig:tension}
\end{figure}

\subsection{Placement controller: a negative result (H1, H2, H3)}\label{sec:results-controller}
The $3.15$M-parameter controller (BC warm-start $+$ PPO, $5$ seeds) returns a
negative result. \textbf{H1} (non-monotone deny-NVM-to-high-value persistence
slope) is not rejected on any seed ($p\ge0.40$; the slope is flat at $0.0$).
\textbf{H2} (beat the strongest cost-matched baseline) ties on every seed
($p=1.0$). \textbf{H3} (controller $\ne$ AURA single-tier) is seed-dependent, and
where it rejects it is an \emph{energy-metric-gaming artifact}: two seeds reject
AURA by emitting MIGRATE actions that deflate the joule denominator, driving a
controller/oracle ratio to $2.11$---impossible against a clairvoyant oracle
(\cref{fig:p2ctrl}). The one non-gaming seed converges to AURA-identical behavior
(McNemar $b=c=0$). The verdict is invariant across all four stream-stitch families
and every price regime, confirming it is a fixed policy property, not a wear
effect; PPO does not beat the BC warm-start. The central cause: at datasheet
S0-connected prices the endurance budget \emph{never binds} (the solved dual is
zero with $196$-KB LIBERO frames), making H1/H3 partly vacuous---itself a boundary
result, which the binding-regime tests (\cref{sec:results-binding}) address.

\begin{figure}[t]
  \centering
  \includegraphics[width=0.92\linewidth]{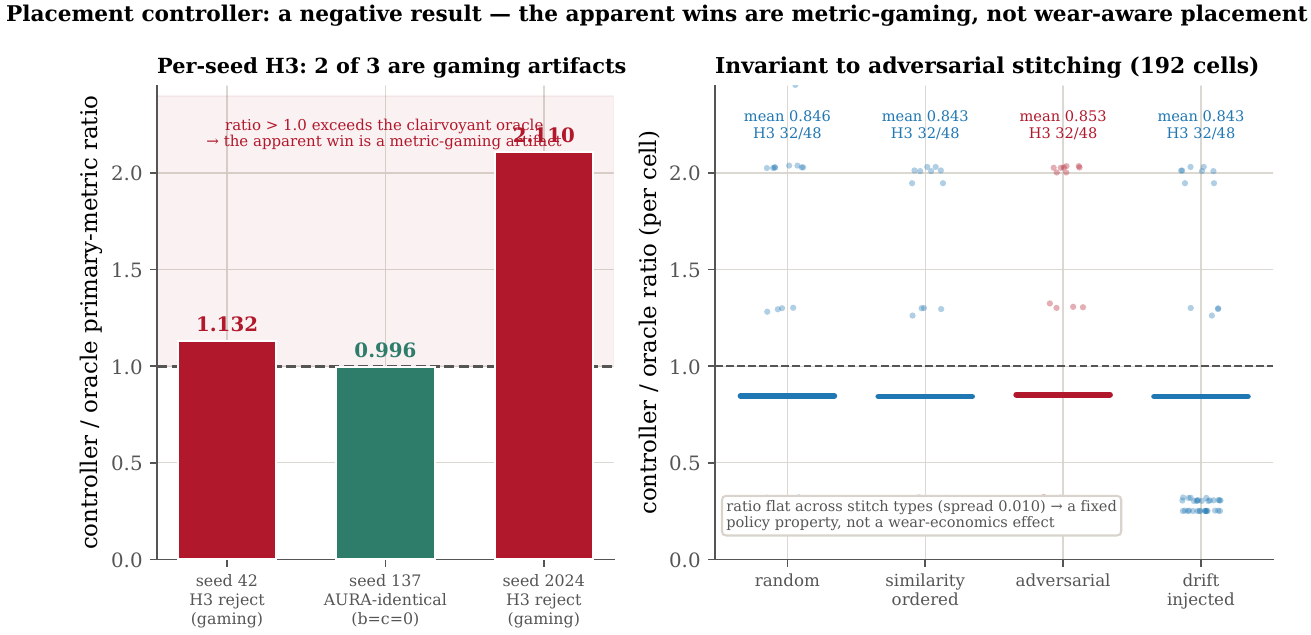}
  \caption{\textbf{Controller robustness and the metric-gaming finding.} Left:
  per-seed H3 outcome (controller/oracle primary-metric ratio); the non-gaming seed
  is AURA-identical (ratio $0.996$), the rejecting seeds exceed the oracle
  (ratios $1.13$, $2.11$, the impossibility signature of gaming). Right: the H3
  verdict is invariant across all four stitch families (spread $0.010$;
  adversarial $\equiv$ random), confirming the difference is a fixed policy
  property, not a wear-economics effect.}
  \label{fig:p2ctrl}
\end{figure}

\subsection{Price statics and the cross-partial (P5a--d, H5, H6)}\label{sec:results-price}
The placement simulator on the real LIBERO-labeled items with the fitted
$\hat\delta=0.032$ and datasheet constants, swept across the S0/S1/S2 price grid
returns \textbf{three of four pre-specified
signs confirmed and one inconclusive null}. P5a ($p_{\mathrm{NAND}}\uparrow\Rightarrow
\eta\downarrow$), P5c ($p_{\mathrm{egress}}\uparrow\Rightarrow\eta\uparrow$), and
P5d ($\Eend\uparrow\Rightarrow\eta\downarrow$) confirm with tight bootstrap CIs
and $100\%$ sign-stability. \textbf{P5b} ($p_{\mathrm{DDR}}\uparrow\Rightarrow
\eta\uparrow$) is an \emph{inconclusive null} ($+7.2\times10^{-8}\approx0$, CI
$[-4.6\times10^{-7},+4.6\times10^{-7}]$, $25\%$ sign-stable): under binding RAM the
shadow price $\mu_R$ absorbs the DDR shock one-for-one, so it never reaches the
endurance margin, a clean identification of the boundary condition, exactly as
rescoped in \cref{prop:statics}. The NVM-share $\sigma_N$ partials are
inconclusive by construction ($\sigma_N$ is budget-pinned at $\approx E_{\mathrm{end}}/\bar w$
once $\eta>0$, so the price signal lives in $\eta$). The \textbf{cross-partial}
(H6) point estimate is $+0.50$ ($87\%$ sign-stable) but its bootstrap CI
$[-0.34,+1.25]$ straddles zero, so it is \emph{directionally consistent,
CI-inconclusive}: H6's non-fatal kill is not triggered, but the sign is not
CI-confirmed either.

\paragraph{Quantitative headline.} The equilibrium rent declines monotonically
across the price band ($\etasim$: LOW $3.16\to$ S0 $2.42\to$ S1 $1.90\to$ S2
$1.47$, all $\times10^{-4}$, simulator units; see \cref{sec:calibration} for why
these are not on the $\etamark$ scale); the break-even item value is $\vbe=0.91$
at base prices. \textbf{The 2025--26 NAND supercycle (S0$\to$S2) cuts $\etasim$ by
$\approx39\%$} ($2.42\to1.47\times10^{-4}$) while leaving $\sigma_N$ ($+0.7$ pp)
and the break-even $\vbe=0.91$ unchanged: the price shock is absorbed by the
wear margin, not the placement boundary (\cref{fig:price}).

\paragraph{Not yet run.} The phase-diagram-measure and RAM-pressure hypotheses
H4/H4b (\cref{prop:ramslack}) require a dedicated swept-box / $C_R$-sweep run that
has not landed; we report them as outstanding rather than fill them from an
unrelated artifact.

\begin{figure}[t]
  \centering
  \includegraphics[width=0.86\linewidth]{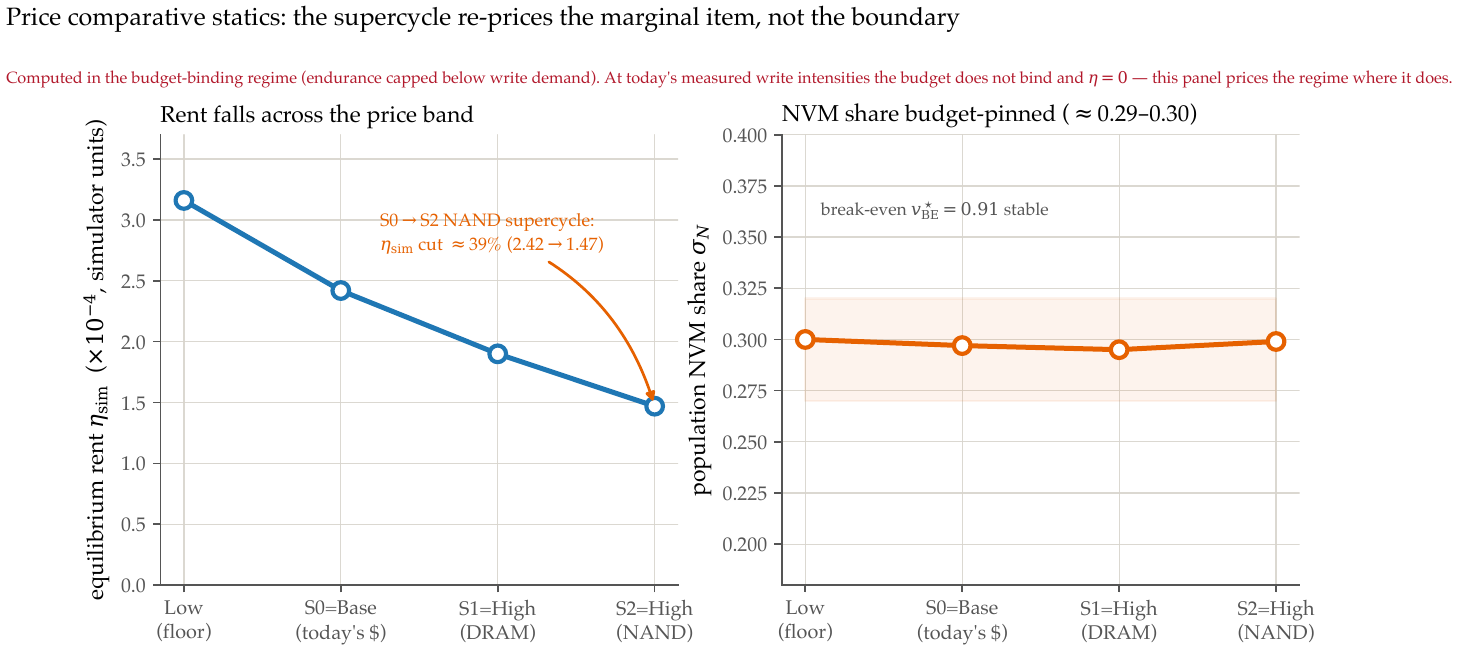}
  \caption{Price comparative-statics fan over the unified Low/Base/High band
  (real re-solved outputs): the equilibrium rent $\etasim$ declines monotonically
  across the band (left) while the population NVM share $\sigma_N$ stays
  budget-pinned at $\approx0.29$--$0.30$ (right), with the break-even
  $\vbe=0.91$ annotated.}
  \label{fig:price}
\end{figure}

\section{Discussion}\label{sec:discussion}
\paragraph{What the priced model buys.}
The central object is the endurance shadow price $\eta$: it fixes the
persist/evict boundary, signs how placement reacts to the memory-price supercycle
(\cref{sec:pricestatics}), and doubles as a device-lifetime price. The learned
controller adds little on top: in both binding regimes (\cref{sec:results-binding})
it ties price-based routing on task value, because realized value is tier-invariant
across RAM/NVM/cloud---so once $\eta$ and the wear-augmented index are in hand,
simple price-based routing suffices on today's hardware. The genuinely open
question is therefore narrower than ``does the controller help'': whether a regime
exists where flash is \emph{forced} and scarce (so the tier choice is not free)
\emph{and} a value signal validated against realized task success
(\cref{sec:honesty}) makes placement causally move performance. That is the next
experiment.

\begin{figure}[t]
  \centering
  \includegraphics[width=0.92\linewidth]{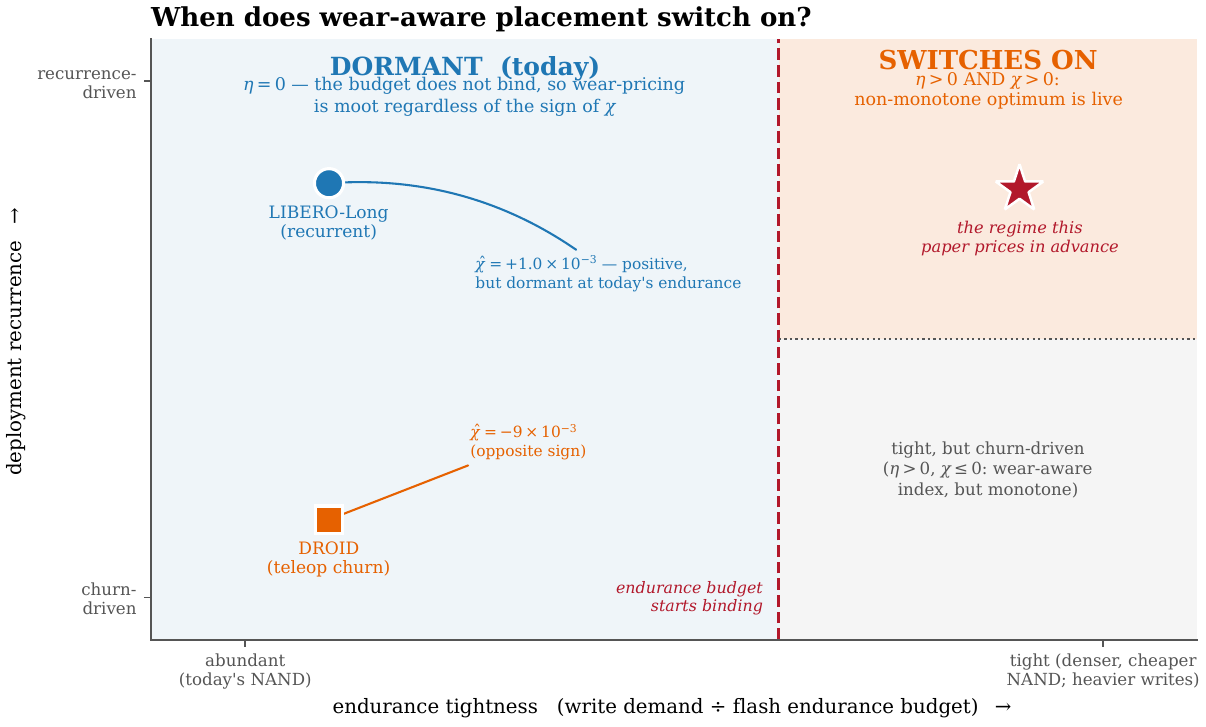}
  \caption{\textbf{When wear-aware placement switches on.} The lever needs
  \emph{both} a binding endurance budget ($\eta>0$, the right of the map) and a
  positive value--write coupling ($\chi>0$, the top): only the upper-right cell
  activates the non-monotone optimum. Our measured datasets sit in the
  endurance-abundant band as run (LIBERO-Long recurrent $\chi>0$; DROID churn-driven
  $\chi<0$), but the binding (right) column is \emph{live today} on the commodity
  QLC/eMMC cheaper edge robots use (\cref{sec:results-binding}), not merely future.
  What remains priced \emph{in advance} is the upper-right corner alone---binding
  endurance \emph{and} a recurrent, $\chi>0$ regime together.}
  \label{fig:regimemap}
\end{figure}

\subsection{Limitations: what this work does \emph{not} establish}\label{sec:honesty}
\begin{enumerate}[leftmargin=1.4em,itemsep=2pt]
\item \textbf{The association $\chi>0$ is regime- and backbone-conditional}, not
universal: positive only for SmolVLA-0.5B on recurrent long-horizon data, null on
a second LIBERO suite, negative on teleoperation (\cref{sec:results-chi}). The
non-monotone branch (\cref{prop:nonmono}) is claimed only in that regime; the
monotone index and rent $\eta$ are sign-agnostic and hold in every measured cell.
\item \textbf{The DROID negative is post-hoc, not pre-specified}: the enlargement
followed a negative, underpowered pilot. We label it exploratory, report it on the
pre-specified $1{,}200$-new-only subset, and treat it as a regime-difference signal
pending replication.
\item \textbf{Cross-backbone comparison is uninterpretable by a pre-specified
criterion}: OpenVLA-7B and SmolVLA agree on item value at only $\rho_s=0.05$ (floor
$0.6$), so we do not read the 7B arm as a disconfirmation. A cross-backbone
agreement floor is a necessary validity check before any cross-model sign claim.
\item \textbf{A pre-specified third backbone (pi0-3.5B) was loaded but deferred}
after a checkpoint state-dict mismatch left its vision tower random-initialized,
which would invalidate its value proxy; we report the two-backbone scale axis
(SmolVLA-0.5B, OpenVLA-7B) instead, with details in \cref{app:repro}.
\item \textbf{The controller win is not demonstrated; one H3 rejection was a
metric-gaming artifact} (a seed exceeding a clairvoyant oracle by deflating the
energy metric; \cref{sec:results-controller}). At datasheet prices the budget never
binds, so the offline H1/H3 tests are partly vacuous.
\item \textbf{The placement$\rightarrow$task-success causal chain is not yet
demonstrated with a VLA in the loop.} We built and ran the project's first true
VLA-in-the-loop arm: a LIBERO-finetuned SmolVLA-0.5B acting in real
LIBERO-Long physics (not labeling static frames): $18$ episodes, $3{,}960$
environment steps, $0$ errors. The
pre-specified causal-room gate required oracle-memory minus no-memory
$\ge +8$~pp and measured $+0.0$~pp: the $0.5$B backbone solves none of the
three hardest LIBERO-Long tasks at a $220$-step budget, so there is no success
signal for placement to perturb, and the minimal training-free memory channel
does not reproduce the published end-to-end-trained memory effect. We
\emph{aborted} the full campaign per the frozen kill criterion rather than
engineering a favorable coupling, and we explicitly scope
placement$\rightarrow$task-success causal validation, which would require a
trained memory-augmented backbone, to future work. A direct consequence bears on
the headline: every $\chi$ we report is a coupling between write-intensity and a
counterfactual value \emph{proxy} $\hat v$, internally validated against the full
SmolVLA counterfactual ($r_{\mathrm{cf}}=0.92$) but \emph{not} validated
end-to-end against realized task success, since the causal gate found no
task-success signal to anchor it. The measured regime-dependence of
$\sign\chi$ stands as a property of $\hat v$; tying it to realized task value
awaits a backbone that can solve the tasks.
\item \textbf{Welfare}: we minimize the operator's private expected cost: no
claim on social welfare or Pareto-efficiency.
\item \textbf{Market equilibrium}: $\mathbf p$ is an exogenous shadow/contract
price; \cref{sec:pricestatics} is partial-equilibrium, operator-side.
\item \textbf{Closed-form $\eta$ path}: the $(1+r)^t$ path is not claimed; $\eta$
is a learned dual under non-stationary demand.
\item \textbf{Separability failure modes}: the index/threshold form is exact only
under item separability; under retrieval complements, write-amplification,
non-stationary $F_t$, and lumpy items, the \emph{direction} of non-monotonicity
survives but the closed-form threshold does not: the case for a learned
controller, theory as scaffold.
\end{enumerate}

\subsection{cs.CY implications}\label{sec:cscy-impl}
By \cref{cor:lifetime}, cost-optimal forgetting is also lifetime-extending: a
fleet-wide erase saving $q$ defers replacement embodied carbon at the $\approx
22$~kg~CO$_2$e/TB anchor~\citep{weppe2025nandcarbon} (which prices NAND
\emph{manufactured}, not bytes \emph{written}). For a $1{,}000$-robot fleet the
solver puts device write-lifetime at $\approx5.2$ years, mapping to $\approx24.4$
TB of replacement NAND per year (fleet capacity $\div$ write-lifetime, \emph{not}
TB-written) and fleet embodied carbon of order $\approx540$~kg~CO$_2$e/yr; the
supercycle shortens device life $\approx2.4\%$, second-order relative to the
$39\%$ swing in $\etasim$. These are order-of-magnitude, relative figures, not
forecasts. Lifetime extension \emph{complements} rather than substitutes for
fleet-refresh economics~\citep{switzer2022junkyard,balyo2025robotfleettco} and
circular-economy policy~\citep{eu2024rtr,eu2024espr}; edge data-residency rules add
a privacy rationale for local persistence~\citep{edge2025gdpr}.

\paragraph{Author continuity.}
AURA~\citep{chen2026aura} is the front-end write gate and a baseline arm;
\citet{chen2026decodegap} supplies edge-decode cost anchors;
\citet{chen2026aegis} is the statistics-methodology precedent.

\section{Conclusion}\label{sec:conclusion}
Under a binding non-renewable write-endurance budget, the cost-minimizing memory
placement rule is a threshold in a wear-augmented per-byte index governed by an
endurance shadow price $\eta$, a rule whose form does not depend on the sign
of the value--write association. On this sign-agnostic spine, the optimum becomes
\emph{strictly non-monotone} in item value under one further condition, a positive
association $\chi>0$ that we measure at a pre-specified gate rather than assume.
Our central empirical finding is that the antecedent's sign is a property of the
deployment regime: positive on recurrent long-horizon data with a small backbone,
null on a second suite, negative on non-recurrent teleoperation, and
uninterpretable across backbones that fail a cross-backbone agreement floor.
The measured boundary of relevance matters as much, and it cuts both ways: on
premium $3{,}000$-P/E TLC the endurance budget does not bind, the confirmed
coupling is small, the calibrated down-crossing lies outside the measured value
support, and our learned controller does not yet beat a simple price-based routing
rule under a binary success proxy---but on the commodity QLC/eMMC
($\sim\!1{,}000$ P/E) that cheaper edge robots actually run, the same measured
write demand exhausts the endurance stock within a deployment, so the budget binds
at datasheet prices and the pricing layer is \emph{economically live}
(\cref{sec:results-binding}). Yet even there, the wear-aware policy \emph{ties}
simple endurance-blind routing on task value: realized value is tier-invariant
across RAM/NVM/cloud, so the rent reshapes cost and device lifetime, not
performance---and the placement gain stays zero because the wear-aware index has no
write-dispersion to exploit. That last point is structural: the recurrence that
makes $\chi$ positive homogenizes the write stream, so $\chi>0$ and high dispersion
are anti-correlated and the win regime sits empty in every workload we measure
(\cref{fig:tension}). Re-solving the calibrated model across the 2025--26 memory-price
supercycle cuts the equilibrium rent by $\approx39\%$ while leaving the
persist/evict boundary fixed, and the rent doubles as a device-lifetime price.
Wear-aware placement is thus economically live today on commodity edge storage
(\cref{fig:regimemap}), but its \emph{task-value} payoff awaits a regime where flash
is forced and scarce and a value signal validated against realized task success.
That is the next step.

\appendix
\section{Proofs}\label{app:proofs}

\begin{proof}[Proof of \cref{prop:monotone}]
With $\eta=0$ the three \emph{persistent} returns share an identical slope in $v$.
Writing $V_i=\lambda_i v_i/(1-\gamma e^{-\delta_i})$, both $\Vfast_i$ (entering
$\Pi^R,\Pi^N$) and $\Vslow_i=\Vfast_i-\lambda_i\ell\pi/(1-\gamma e^{-\delta_i})$
(entering $\Pi^C$) are affine in $v_i$ with the \emph{same} coefficient
$\lambda_i/(1-\gamma e^{-\delta_i})$; the tier-specific terms ($\mu_R{+}p_R$,
$p_N s_i$, $p_C s_i$, and at $\eta=0$ the bounded cash wear $\cwear w_i$) are
$v$-independent intercepts. Hence $\Pi^R,\Pi^N,\Pi^C$ are \emph{parallel} lines in
$v$: the choice \emph{among} the persistent tiers is fixed by their intercepts and
does not vary with $v$: there is no unique ``largest $v$-coefficient'' tier.
The \emph{only} $v$-dependent margin is persist-vs-discard: $\Pi^\varnothing
=-\Pr[\text{needed}]\kappa_i$ has the smallest $v$-slope (zero, or sub-linear by
\cref{ass:recompute}), so $\max(\Pi^R,\Pi^N,\Pi^C)-\Pi^\varnothing$ is strictly
increasing in $v$. Single-crossing of this margin gives a threshold $v^\dagger$
above which some persistent tier dominates discard. Above $v^\dagger$, when the
$v$-independent intercepts make NVM the best persistent tier (i.e.\
$\Pi^N\ge\max(\Pi^R,\Pi^C)$, the condition $e_N<e_C+\ell\pi$ on the energy/latency
terms), $\Ind[x_i=N]$ is weakly increasing in $v_i$, a step up driven by the
persist-vs-discard margin, not by any tier owning the steepest $v$-slope.
\end{proof}

\begin{proof}[Proof of \cref{prop:nonmono}]
Define the slack $g(v):=B(v)-(\cwear+\eta)\wbar(v)$.
\emph{(i) Low $v$:} $V$ small, even locality below the discard/cloud option:
$\Pr[x=N\mid v]$ low, rising as $V(v)$ clears the cloud margin.
\emph{(ii) Moderate $v$:} $g(v)>0$ and $V(v)$ beats the cloud option: persist.
\emph{(iii) High $v$:} $B'(v)\approx 0$ (fact 1) while $(\cwear+\eta)\chi>0$
(fact 2), so $g'(v)<0$; for $\eta>\bar\eta$ the slack crosses zero from above at
interior $\vdc$, after which \eqref{eq:benc} fails and the item routes to
cloud (or cheap recompute, \cref{ass:recompute}). Hence $\Pr[x=N\mid v]$ strictly
falls past $\vdc$: rise-then-fall. By the implicit function theorem on
$g(\vdc;\eta,\chi)=0$ with
$g_v=B'(\vdc)-(\cwear+\eta)\chi<0$:
$\partial\vdc/\partial\eta=-g_\eta/g_v$, where $g_\eta=-\wbar(\vdc)<0$, so
$\partial\vdc/\partial\eta=-\wbar(\vdc)/[(\cwear+\eta)\chi-B'(\vdc)]<0$.
Writing $\wbar(v)=w_0+\chi v$ locally, $g_\chi=-(\cwear+\eta)\vdc<0$, giving
$\partial\vdc/\partial\chi=-g_\chi/g_v<0$. The $\chi>0$ requirement is
necessary: at $\chi=0$, $g'(v)=B'(v)\approx 0$ and no interior down-crossing
exists (monotone, \cref{prop:monotone}).
\end{proof}

\begin{proof}[Proof of \cref{prop:ramslack}]
The down-crossing is fixed by \eqref{eq:benc}, the NVM-vs-(cloud/recompute)
margin, whose RHS rises in $v$ via $\chi>0$ \emph{independent of} $\mu_R$.
$\mu_R$ enters only \eqref{eq:benr}, i.e.\ \emph{which} non-NVM tier wins. Both
$\Pi^C=\Vslow-p_C s$ and $\Pi^\varnothing=-\Pr[\text{needed}]\kappa$ are
$\mu_R$-free and available (recompute cheap for high $v$ by
\cref{ass:recompute}); so $\Pr[x=N\mid v]$ falls past $\vdc$ for any
$\mu_R\ge 0$, with $\arg\max(\Pi^R,\Pi^C,\Pi^\varnothing)$ re-labeling the
receiver. The location result follows from the budget identity (raising $\mu_R$
raises NVM demand, hence $\eta$) and $\partial\vdc/\partial\eta<0$.
\end{proof}

\begin{proof}[Proof of \cref{prop:statics} (sketch)]
\begin{sloppypar}
Each sign differentiates the break-even conditions \eqref{eq:benc}/\eqref{eq:benr}
and the budget identity $\sum\sum w=\Eend$ implicitly defining $\eta(\mathbf p)$.
P5a/P5d are clean (single-signed channels); P5b/P5c carry a directional label
where the second-order $\eta$-feedback opposes the first-order channel, as noted.
\end{sloppypar}
\end{proof}

\section{Artifact and Reproducibility}\label{app:repro}
\sloppy
All figures and statistics are regenerable from the released artifact bundle
(\texttt{wamp\_reproducibility/}, distributed as supplementary material with this
paper): the \texttt{code/} tree (simulator, eval harness, cost model,
$\chi$ estimators, price-statics and calibration scripts, tests), the frozen
pre-specified plan (\texttt{experiments/experiment\_plan.md}), the canonical
$\chi$ re-analysis table
(\texttt{experiments/chi\_reanalysis/chi\_canonical\_table.json}, the source of
truth for every $\chi$ reported here), every per-phase analysis output, and the run
and cost registries (\texttt{runs/}, $46$ billable rows summing to \$18.3764). A
\texttt{MANIFEST.md} at the artifact root maps each paper claim to its regeneration
script and data file. The decision log (D-001--D-016) and the multi-round
governance audit trail are referenced by identifier throughout and are included in
the complete repository release alongside this code/data core. The
only exclusions are the raw PPO training-log directories ($\approx$146\,MB,
regenerable from the pinned seeds and configs) and the external datasets and
checkpoints, which are not redistributed but are pinned by HuggingFace slug and
revision in the dataset card. Total project compute spend was $\approx\$18.38$
(the original pre-specified campaign spent $\approx\$17.26$---its kill criteria,
not the budget, stopped it---and the subsequent full-power replication and
commodity-storage binding analyses added $\approx\$1.12$).

\paragraph{Software versions (pinned).}
Throughout this appendix, internal campaign codes are used for provenance: W1/W2
denote the first and second $\chi$ batteries (W2 is the dose-response sweep,
``W2 top-up'' its pre-specified replication), W3a the closed-loop diagnostic
folded into \cref{sec:results-binding}, and D-\textit{nnn} entries reference the shipped
decision log. The campaign used \emph{three} coexisting Python stacks (D-003): a
main train/eval env (\texttt{transformers} 4.55.4) for the Phase-2 controller, W3a
diagnostic, P3 statics, and calibration; the Modal $\chi$-estimation containers
(\texttt{transformers} 4.51.3 with \texttt{lerobot[smolvla]} 0.3.3) for every
SmolVLA arm (Phase-0 gate, labeling, DROID re-test, all $\chi$ batteries, and the
deferred pi0-3.5B probe); and an isolated OpenVLA env (\texttt{transformers} 4.40.1
$+$ \texttt{timm} 0.9.10) for the 7B arm. The headline and regime $\chi$ results
depend on the 4.51.3 stack and the OpenVLA arm on 4.40.1; no $\chi$ estimate
depends on the main env. Full version pins ship in \texttt{code/pyproject.toml} and
the artifact.

\paragraph{Random seeds.}
Controller training/eval seeds Modal $\{42,137,2024\}$ and Lambda $\{7,99\}$;
bootstrap seeds $\{42,137,2024,7,99,2718\}$ (primary $=137$); W2 top-up seeds
$\{1001,1002,1003\}$. All $\chi$ CIs use $1{,}000$-resample physical-scene-clustered
bootstraps.

\paragraph{Hardware and per-phase cost.}
Modal (L40S, T4) and a single Lambda H100-SXM lane (terminated and verified).
Key per-phase spend from \texttt{runs/cost\_registry.csv}:

\begin{table}[h]
\centering\small\setlength{\tabcolsep}{5pt}
\begin{tabular}{llr}
\toprule
Phase / arm & Hardware & Cost (USD)\\
\midrule
Infra \& staging smoke & Modal L40S & 0.0230\\
Phase-0 full gate & Modal L40S & 0.2748\\
Phase-1 full labeling & Modal L40S & 0.9736\\
Phase-2 placement controller ($\ge3$ seeds) & Modal L40S/T4 & 2.4542\\
DROID re-test (enlarged) & Modal L40S & 0.5540\\
Lambda lane (seeds 7/99 $+$ OpenVLA arm) & Lambda H100-SXM & 7.2600\\
$\chi$ re-analysis (OpenVLA relabel) & Modal L40S & 0.8500\\
Phase-3 price comparative-statics & CPU (local) & 0.0000\\
W1+W2 $\chi$ suites $+$ dose-response & Modal L40S & 1.1931\\
pi0-3.5B probes (deferred, D-013) & Lambda H100-SXM & 0.0570\\
W3a closed-loop diagnostic & Modal L40S & 0.0800\\
W2 top-up dose-response & Modal L40S & 0.8004\\
VLA-loop smoke (HV0/HV1, parallel) & Modal L40S & 2.7381\\
\midrule
Original campaign subtotal &  & 17.2582\\
Full-power 600-scene replication (\cref{sec:results-dose}) & Modal L40S & 1.1182\\
Commodity-storage binding + ladder analyses (\cref{sec:results-binding}) & CPU (local) & 0.0000\\
\midrule
\textbf{Cumulative project total} &  & \textbf{18.3764}\\
\bottomrule
\end{tabular}
\end{table}

\noindent The table is exhaustive over the $46$ billable rows of
\texttt{runs/\allowbreak cost\_registry.csv} (REFERENCE-only rows excluded); rows
sum exactly to the \$18.3764 cumulative total. The original pre-specified campaign
spent \$17.2582 (its kill criteria, not the budget, stopped it); the subsequent
full-power replication and commodity-storage binding analyses added \$1.12.

\paragraph{Data and checkpoint slugs (HuggingFace).}
Datasets: \texttt{lerobot/droid\_1.0.1} (full DROID; analyzed 1{,}200-new-only
subset), \texttt{lerobot/droid\_100} (rev
\texttt{87301a2}), and the LIBERO suites via
\texttt{openvla/\allowbreak modified\_libero\_rlds} (rev \texttt{6ce6aaa}).
Checkpoints: \texttt{lerobot/smolvla\_base},
\texttt{HuggingFaceVLA/\allowbreak smolvla\_libero},
\texttt{moojink/\allowbreak openvla-7b-oft-\allowbreak finetuned-libero-10}, and
\texttt{lerobot/pi0} (deferred). Revisions are pinned in the dataset card where
determinable from \texttt{experiments/\allowbreak p1\_staging\_manifest.json};
otherwise stated as \emph{latest as of 2026-06-12}. Full per-arm episode/scene
counts, preprocessing, and license notes are in the dataset card pointer:
\texttt{experiments/DATASET\_CARD.md}.

\paragraph{Controller hyperparameters.}
Placement controller (\texttt{code/configs/phase2\_controller.yaml}): a $3.15$M-parameter
transformer ($d_{\mathrm{model}}{=}256$, $8$ heads, $4$ layers, FF mult $4$, $5$
actions \{keep-RAM, write-NVM, offload-cloud, discard, migrate\}, max $512$ items;
$\le 50$M cap). Training: behavior-cloning warm-start ($5$ epochs, lr $10^{-3}$,
hindsight-oracle targets) then clipped-surrogate PPO (rollout length $2048$,
lr $3\times10^{-4}$, $\gamma{=}0.95$, GAE $\lambda{=}0.95$, clip $\epsilon{=}0.2$,
$4$ PPO epochs, minibatch $256$, $200{,}000$ total steps), $\ge 3$ seeds; scene
split $70/15/15$; reward $=$ success-proxy $-\lambda_E\,$energy $-\nu\,$erase
$-\rho_C\,$cloud-\$$\,-\,$migration-cost $0.05$.

\paragraph{Script $\rightarrow$ claim map.}
Headline $\hat\chi$ + matrix: \texttt{phase0\_gate.py}, \texttt{w1w2\_chi.py}
($\to$ \texttt{chi\_\allowbreak canonical\_table.json}); OpenVLA scale-stress arm:
\texttt{openvla\_\allowbreak confirmation\_arm.py}; DROID enlarged (post-hoc):
\texttt{droid\_retest.py}; W2 dose-response $+$ top-up: \texttt{w2\_topup.py},
\texttt{power\_\allowbreak analysis\{,2,3\}.py}; controller (negative result):
\texttt{phase2\_train.py}/\texttt{full.py}, \texttt{w3a\_rollout.py}
($\to$ \texttt{GATE\_\allowbreak VERDICT.json}); $\chi$ canonical re-analysis:
\texttt{chi\_estimators.py} (a verbatim port of
\texttt{phase0\_gate.\allowbreak \_local\_linear\_slope}).

\paragraph{Run-ID provenance.}
The run identifiers behind each result (relocated here from inline tags for the
camera-ready build) are listed in \cref{tab:runids}.

\begin{table}[h]
\centering\small\setlength{\tabcolsep}{5pt}
\caption{Run-ID provenance for the main results.}
\label{tab:runids}
\begin{tabular}{ll}
\toprule
Result & Run ID(s)\\
\midrule
Phase-0 value--write gate & \texttt{phase0-full-20260612} (GATE-cov)\\
Canonical $\chi$ matrix / re-analysis & \texttt{chi-reanalysis-20260612} (D-012); \texttt{W1+W2 battery}\\
Phase-1 labeling / regime mechanism & \texttt{phase1-full-20260612}; \texttt{droid-retest-20260612}\\
Recurrence dose-response & \texttt{W2 battery}; \texttt{W2-topup}\\
Controller (H1/H2/H3) & \texttt{P2 eval battery} ($64$ cells/seed, Modal/Lambda lanes)\\
Closed-loop W3a & \texttt{w3a-diagnostic-5seed-20260612}\\
Price statics / cross-partial & \texttt{p3-pricestat-20260612}\\
Calibration ($\delta$-anchor) & \texttt{econ-calibration-20260612}; \texttt{droid-retest-20260612}\\
VLA-loop (aborted) & \texttt{vla-loop-smoke-parallel-20260612}\\
\bottomrule
\end{tabular}
\end{table}

\bibliographystyle{plainnat}
\bibliography{references}

\end{document}